\documentclass{article}
\usepackage[accepted]{icml2026}
\usepackage[utf8]{inputenc} 
\usepackage[T1]{fontenc}    
\usepackage{hyperref}       
\usepackage{url}            
\usepackage{booktabs}       
\usepackage{amsfonts}       
\usepackage{nicefrac}       
\usepackage{microtype}      
\usepackage{xcolor}         
\usepackage{amsmath,amssymb,amsfonts,amsthm}
\usepackage{algorithm, algorithmic}
\usepackage{graphicx}
\usepackage{textcomp}
\usepackage{xcolor}
\usepackage{dsfont}
\usepackage{comment}
\usepackage{hyperref}
\usepackage{xspace}
\usepackage{chngcntr}
\usepackage{apptools}
\usepackage{enumitem}
\usepackage{dsfont}
\usepackage{xparse}
\usepackage[capitalise,noabbrev]{cleveref}
\usepackage{bm}
\usepackage[symbol]{footmisc}
\usepackage{tikz}
\usepackage{subcaption}
\usepackage{aliascnt}
\usepackage{tablefootnote}
\usetikzlibrary{fit,backgrounds,calc}
\newcommand{\bra}[1]{\left({#1}\right)}
\newcommand{\sbra}[1]{\left[{#1}\right]}
\newcommand{\cbra}[1]{\left\{{#1}\right\}}
\newcommand{\abs}[1]{\left|{#1}\right|}
\NewDocumentCommand{\eE}{o m}{\mathbb{E}\IfValueT{#1}{_{#1}}\sbra{{#2}}}
\newcommand{\var}[1]{\text{Var}\sbra{#1}}
\newcommand{\norm}[1]{\left\lVert{#1}\right\rVert}

\newcommand{\expec}{\mathbb{E}}

\newaliascnt{appendix}{section}
\crefname{appendix}{Appendix}{Appendices}
\Crefname{appendix}{Appendix}{Appendices}
\aliascntresetthe{appendix}

\crefmultiformat{appendix}
  {Appendices~#2#1#3}
  { and~#2#1#3}
  {, #2#1#3}
  { and~#2#1#3}

\makeatletter
\DeclareRobustCommand\onedot{\futurelet\@let@token\@onedot}
\def\@onedot{\ifx\@let@token.\else.\null\fi\xspace}

\newcommand{\Input}{\STATE\textbf{Input: }}
\newcommand{\Output}{\STATE\textbf{Output: }}
\newcommand{\RETURN}{\STATE\textbf{Return: }}

\def\ie{\emph{i.e}\onedot}

\makeatother

\newtheorem{proposition}{Proposition}
\numberwithin{proposition}{section}
\newtheorem{theorem}{Theorem}
\numberwithin{theorem}{section}
\newtheorem{remark}{Remark}
\numberwithin{remark}{section}
\newtheorem{lemma}{Lemma}
\numberwithin{lemma}{section}

\usepackage[dvipsnames]{xcolor}

\begin{document}
\twocolumn[
  \icmltitle{Efficient Learning of Deep State Space Models via Importance Smoothing}



  \begin{icmlauthorlist}
    \icmlauthor{John-Joseph Brady}{dent}
    \icmlauthor{Nikolas N\"usken}{math}
    \icmlauthor{Yunpeng Li}{dent}
  \end{icmlauthorlist}
  \icmlaffiliation{dent}{Centre for Oral, Clinical and Translational Sciences, King's College London, London, United Kingdom}
  \icmlaffiliation{math}{Department of Mathematics, King's College London, London, United Kingdom}
  \icmlcorrespondingauthor{John-Joseph Brady}{John-Joseph.brady@kcl.ac.uk}
  \icmlkeywords{state space model, Monte Carlo, variational inference}
  \vskip 0.3in
]
\printAffiliationsAndNotice{}
\begin{abstract}
Latent state space systems are ubiquitous in statistical modelling, arising naturally when time series are observed through noisy measurements. However, training deep state space models (DSSMs) at scale remains difficult. Two largely distinct strategies have emerged for training DSSMs. The first, auto-encoding DSSMs, trains generative models by optimising a variational lower bound. The second backpropagates through the outputs of classical sequential Monte Carlo (SMC) algorithms. Such approaches can train DSSMs for both discriminative and generative tasks, but their inherently sequential forward passes scale poorly on modern hardware. We propose \emph{parallel variational Monte Carlo} (PVMC), a new training method that bridges these paradigms and robustly trains DSSMs for both discriminative and generative tasks. Across a set of benchmark experiments, PVMC matches or exceeds state-of-the-art performance while training $10\times$ faster than the fastest competing SMC-based approach.
\end{abstract}


\section{Introduction}

\tikzset{
    particle/.style={circle, fill=black, inner sep=0pt, minimum size=3pt},
    weight/.style={->, draw=blue!50, line width=0.8pt},
    columnbox/.style={draw=none, fill=green!25, rounded corners=2pt},
    ws/.style={->, dashed, draw=red!50, line width=0.8pt},
    sample/.style={->, draw=red!50, line width=0.8pt},
    det/.style={->, draw=black!70, line width=0.8pt},
    weight_de/.style={<->, draw=blue!50, line width=0.8pt},
}

\begin{figure*}[t]
    \centering
    \begin{subfigure}[t]{0.3\textwidth}
    \centering
        \begin{tikzpicture}
          \foreach \i in {0,...,5}
            \foreach \j in {0,...,3}
              \node[particle] (p\i\j) at (\i, \j) {};
        
          \foreach \i in {0,...,4}{
            \foreach \j in {0,...,3}{
                \foreach \k in {0,...,3}{
                    \draw[weight_de] (p\i\j) -- (p\the\numexpr\i+1\relax\k);
                }
            }
          }
        
            \begin{scope}[on background layer]
                \foreach \i in {0,...,5} {
                    \node[columnbox, fit=(p\i0)(p\i3)] (box\i) {};
            }
            \end{scope}
            \foreach \i in {0,...,5} {
                \node[draw] (y\i) at ($(p\i3) + (0,40pt)$) {$y_{\i}$};
            }
            
            \coordinate (H) at ($(y2.center)!0.5!(y3.center) + (0,-20pt)$);
            \node[draw] (h) at (H) {$h$};
            \foreach \i in {0,...,5}{
                    \draw[weight] (y\i) -- (box\i);
                    \draw[det] (y\i) -- (h);
                    \draw[sample] (h) -- (box\i.north);
            }
        \coordinate (arrowstart) at ($(box1.south west) + (0,-0.5cm)$);
                \coordinate (arrowend)   at ($(box4.south east) + (0,-0.5cm)$);

                \draw[->, thick] (arrowstart) -- (arrowend) node[midway, below] {time};
        \end{tikzpicture}
        \subcaption{PVMC (ours)}
    \end{subfigure}
    \hspace{0.03\textwidth}
    \begin{subfigure}[t]{0.3\textwidth}
    \centering
          \begin{tikzpicture}
          \foreach \i in {0,...,5}
            \foreach \j in {0,...,3}
              \node[particle] (p\i\j) at (\i, \j) {};
        
          \foreach \i in {0,...,4}{
            \foreach \j in {0,...,3}{
                \draw[weight_de] (p\i\j) -- (p\the\numexpr\i+1\relax\j);
                \draw[ws] (p\i\j) -- (p\the\numexpr\i+1\relax\j);
        
            }
          }
        
            \begin{scope}[on background layer]
                \foreach \i in {0,...,5} {
                    \node[columnbox, fit=(p\i0)(p\i3)] (box\i) {};
            }
            \end{scope}
            \foreach \i in {0,...,5} {
                \node[draw] (y\i) at ($(p\i3) + (0,40pt)$) {$y_{\i}$};
            }
            
            \coordinate (H) at ($(y2.center)!0.5!(y3.center) + (0,-20pt)$);
            \node[draw] (h) at (H) {$h$};
            \foreach \i in {0,...,5}{
                    \draw[weight] (y\i) -- (box\i);
                    \draw[det] (y\i) -- (h);
                    \draw[sample] (h) -- (box\i.north);
            }
        \coordinate (arrowstart) at ($(box1.south west) + (0,-0.5cm)$);
                \coordinate (arrowend)   at ($(box4.south east) + (0,-0.5cm)$);

                \draw[->, thick] (arrowstart) -- (arrowend) node[midway, below] {time};
        \end{tikzpicture}
        \subcaption{IWAE-DSSM}
    \end{subfigure}
    \hspace{0.03\textwidth}
    \begin{subfigure}[t]{0.3\textwidth}
    \centering
          \begin{tikzpicture}
          \foreach \i in {0,...,5}
                \foreach \j in {0,...,3}
                  \node[particle] (p\i\j) at (\i, \j) {};
            
              \foreach \i in {0,...,4}{
                \foreach \j in {0,...,3}{
                    \foreach \k in {0,...,3}{
                        \draw[sample] (p\i\j) -- (p\the\numexpr\i+1\relax\k);
                    }
                }
              }
            
                \begin{scope}[on background layer]
                    \foreach \i in {0,...,5} {
                        \node[columnbox, fit=(p\i0)(p\i3)] (box\i) {};
                }
                \end{scope}
                \foreach \i in {0,...,5} {
                    \node[draw] (y\i) at ($(p\i3) + (0,40pt)$) {$y_{\i}$};
                }
                
                \foreach \i in {0,...,5}{
                        \draw[weight] (y\i) -- (box\i);
                }
                \coordinate (arrowstart) at ($(box1.south west) + (0,-0.5cm)$);
                \coordinate (arrowend)   at ($(box4.south east) + (0,-0.5cm)$);

                \draw[->, thick] (arrowstart) -- (arrowend) node[midway, below] {time};
                    \end{tikzpicture}
        \subcaption{Bootstrap DSMC}
    \end{subfigure}
    \caption{Comparison of the sampling and weighting strategies. The black dots represent the sampled particles. Dependence via sampling and weighting are represented by red and blue arrows respectively. Red-blue dashed arrows show bi-directional dependence weighting and forward dependence via sampling. Grey arrows denote deterministic dependence. $h$ is a high-dimensional hidden variable that aggregates information over the complete trajectory of observations. DSMC is not amenable to parallelisation since, due to resampling, each particle's sampling distribution depends on all the particles at the previous time-step. IWAE-DSSMs are typically not used for state estimation tasks as they do not permit interaction between and therefore marginalisation over generated trajectories of particles.}
    \label{fig:sampling_diag}
\end{figure*}

State-space models (SSMs) provide a general framework for describing time-evolving systems. In an SSM, system dynamics are governed by an unobserved latent state evolving according to Markovian dynamics, while observations are generated as random variables conditioned on this state. Owing to their flexibility, SSMs have been widely applied across diverse domains including target tracking \citep{Blom1985IMM}, option pricing \citep{Heston1993Heston-Model}, ecology \citep{patterson2017ecology}, meteorology \citep{clayton2013meteorology}, and neuroscience \citep{panisky2013neurology}.

Parameter estimation in SSMs \citep{kantas2015parameter-est} is a challenging problem. Classical approaches such as expectation–maximisation \citep{dempster1977EM} and Bayesian parameter estimation \citep{andrieu2010PMCMC} are typically computationally tractable only for models with a restricted algebraic structure or low-dimensional parameter spaces. This limitation has motivated the development of gradient-based optimisation methods, particularly in settings where SSM components are parameterised by neural networks.

Deep state-space models (DSSMs), which parameterise transition and observation models with neural networks, are commonly trained using one of two paradigms. The first frames the SSM as a variational auto-encoder (VAE) \citep{kingma2013vae}, using an encoder network that is independent of the latent dynamics \citep{krishnan2017DMM, rangapuram2018DSSM, lin2024DSSM}. While this approach enables scalable, fully parallel training, it is poorly suited to supervised or semi-supervised objectives and yields loose variational bounds.

The second paradigm is differentiable sequential Monte Carlo (DSMC) \cite{jonschkowski2018DPF,chen2023overview}, which approximates the latent posterior with an importance sample generated by particle filtering \citep{chopin2020particle_filtering}. DSMC naturally supports supervised losses \citep{jonschkowski2018DPF} and can provide tighter variational bounds than VAE-based approaches \citep{maddison2017variational}. However, its inherently sequential structure limits parallelism on modern hardware, substantially increasing training costs.

We propose \emph{parallel variational Monte Carlo (PVMC)}, which bridges these two paradigms. Like VAE-DSSMs, PVMC avoids sequential proposal mechanisms of particle filtering, enabling efficient parallel execution. At the same time, like DSMC, it constructs an importance-weighted approximation to the marginal posterior over latent states, supporting supervised training and therefore allowing meaningful latent representations. Specifically, PVMC targets the marginal smoothing posterior defined as the marginal distribution of the latent state at each time-step conditional on the full observation sequence, rather than the online (filtering) posterior targeted by most DSMC methods. Furthermore, PVMC yields tighter variational bounds than standard VAE-based methods by accounting for every possible trajectory through the set of proposed particles.

Our contributions are threefold:

\begin{itemize}
    \item We introduce PVMC\footnote{Our implementation, including code to run the experiments, can be found at \href{https://github.com/John-JoB/parallel-variational-sequential-monte-carlo}{https://github.com/John-JoB/parallel-variational-sequential-monte-carlo}}, the first end-to-end differentiable particle smoother with unbiased gradient updates and a statistically consistent posterior approximation.
    \item We derive a new evidence lower bound (ELBO) for training generative DSSMs that achieves a tighter variational bound than standard single-sample or importance-weighted objectives.
    \item We develop a hardware-aware recursion for the importance weights, achieving a $10\times$ speed-up over the fastest competing DSMC approach and $100\times$ over unbiased DSMC approaches in our experiments on an NVIDIA RTX 4090 GPU. 
\end{itemize}

The paper is structured as follows. We review background on SSMs and parallel computation in \cref{sec:Background}.  \cref{sec:pvmc} presents PVMC and proves the correctness of the proposed recursions. \cref{sec:related} discusses related work. Experimental results are presented in \cref{sec:exp}. \cref{sec:conclusion} concludes the paper.

\section{Background}

\label{sec:Background}
At a high level, PVMC will be constructed using importance-weighted approximations to the smoothing distribution under a given SSM. The importance weights will be calculated via prefix/suffix scans, yielding an overall span complexity of $\mathcal{O}\bra{\log N \times \log T}$. As a by-product of computing the importance weights, we will obtain an estimator of the log-likelihood with strong connections to the evidence lower bound of importance weighted auto-encoders (to be made specific in  \cref{sec:ELBO}). In this section, we review relevant key concepts.

\paragraph{State-space models:}
We consider a time-discretised state-space model (SSM) with latent states $x_t \in \mathbb{R}^{d_x}$ and observations $y_t \in \mathbb{R}^{d_y}$:
\begin{gather}
    \label{eq:ssm-prior}
    x_{0} \sim P\,,\\
    \label{eq:dyn-kernel}
    x_{t>0} \sim M_{t}\bra{\cdot\mid x_{t-1}}\,,\\
    \label{eq:obs-kernel}
    y_{t} \sim H_{t}\bra{\cdot\mid x_t}\,.
\end{gather}
Here, $P$ is the prior over the initial state, and $M_{t}$ and $H_{t}$ are the dynamic and observation kernels, respectively.
All distributions considered may depend on the model parameters $\theta$, but we do not notate this explicitly for the purpose of readability. We assume that $P$, $M_{t}\bra{\cdot\mid x_{t-1}}$, and $H_{t}\bra{\cdot\mid x_{t}}$ admit densities for all values of $x_{0:T} \in \mathbb{R}^{\bra{T+1}d_x}$. The joint smoothing density factorises as
\begin{multline}
\small
\label{eq:smoothing}
    Q_{0:T}\bra{x_{0:T}} = \\ 
    \frac{1}{L}P\bra{x_{0}}\prod^{T}_{t=1}M_{t}\bra{x_{t}\mid x_{t-1}}\prod^{T}_{t=0}H_{t}\bra{y_{t}\mid x_{t}} \,,
\end{multline}
\normalsize
where $L = p\bra{y_{0:T}}$ is a normalisation constant.

In this paper we work with a parameterised family of SSMs and develop a class of novel algorithms for learning these parameters efficiently and effectively.

\paragraph{Associative prefix sums:}
It is well known that associative prefix sums (also known as scans) can be computed efficiently using parallel hardware \cite{blelloch1990prefix}. Given a sequence $a_{0:T}$, the prefix sum produces a new sequence $b_{0:T}$ where each element $b_{t}$ is an aggregation of the elements $a_{0:t}$. When the aggregation operator $\oplus$ is associative, $b_{0:T}$ admits a highly parallelised computation strategy.

Formally, let $\bra{\mathbb{S}, \oplus}$ be a semigroup; with $a_{0:T} \in \mathbb{S}^{T+1}$. Then,
\begin{equation}
\label{eq:prefx_sum}
    b_0 = a_0, \,\qquad b_{t>0} = b_{t-1} \oplus a_t\,.
\end{equation}
Define similarly the suffix sum which aggregates the elements $a_{0:T}$ in the backward direction: 
\begin{equation}
\label{eq:suffix_sum}
    \hat{b}_{T} = a_{T}, \, \qquad  \hat{b}_{t<T} = a_{t} \oplus \hat{b}_{t+1}.
\end{equation}
We measure parallel efficiency of an algorithm via span complexity, i.e. the critical path length \citep{blumofe1999scheduling}. Span complexity is the theoretical time complexity on hardware with an infinite number of processors. In Appendix \ref{sec:app:ps-sums}, we provide an efficient algorithm for computing the prefix and suffix sum in a single pass with span complexity $\mathcal{O}\bra{\log T}$.

\paragraph{Importance-weighted auto-encoders:}
Variational auto-encoders (VAEs) \cite{kingma2013vae} are a paradigmatic approach to generative modelling. In general, VAEs model each datum $\tilde{y}$ as a possibly noisy observation of a random latent variable $\tilde{x}$:
\begin{equation}
\label{eq:gen model}
    \tilde{x} \sim \tilde{P}, \, \qquad  \tilde{y} \sim \tilde{H}\bra{\cdot\mid \tilde{x}}\,.
\end{equation}
In the context of DSSMs, we can view an entire trajectory as one latent variable:
\begin{gather}
    \tilde{x} = x_{0:T}, \quad \tilde{P}\bra{\tilde{x}} = P\bra{x_{0}}\prod^{T}_{t=1}M_{t}\bra{x_{t}\mid x_{t-1}}, \\
    \tilde{y} = y_{0:T}, \quad \quad  \tilde{H}(\tilde{y}\mid \tilde{x}) = \prod^{T}_{t=0}H_{t}\bra{y_{t}\mid x_t}.
\end{gather}

Given samples from a proposal distribution, $\tilde{X}^{n} \sim \tilde{V}\bra{\cdot| \tilde{y}}$, importance sampling yields an unbiased Monte Carlo estimator of the likelihood $p\!\bra{\tilde{y}}$:
\begin{equation}
\label{eq:IWAE}
\tilde{L}^{N} = \frac{1}{N}\sum^{N}_{n=1}w^{n}, \; w^{n} = \frac{\tilde{P}\bra{\tilde{X}^{n}}\tilde{H}\bra{\tilde{y}\mid \tilde{X}^{n}}}{\tilde{V}\bra{\tilde{X}^{n} \mid \tilde{y}}}.
\end{equation}
 
The optimal choice of $\tilde{V}\bra{\cdot\mid \tilde{y}}$, minimising the variance of $\tilde{L}^{N}$, is the posterior distribution of $\tilde{x} \mid \tilde{y}$ under the generative model (\cref{eq:gen model}). We denote this posterior by $Q\bra{\cdot\mid \tilde{y}}$; typically, this distribution is intractable and one approximates it with a deep neural network.

Rather than maximising $\expec\sbra{\tilde{L}^{N}}$, variational inference methods maximise the evidence lower bound (ELBO), $\expec\sbra{\log \tilde{L}^{N}}$, which satisfies the standard IWAE monotonicity property \cite{burda2016IWAE},
\begin{equation}
    \log p\bra{\tilde{y}} \geq \expec\sbra{\log\tilde{L}^{N+1}} \geq \expec\sbra{\log\tilde{L}^{N}}.
\end{equation}
For standard VAEs, we have $N=1$ and the ELBO decomposes as
\begin{equation}
\label{eq:ELBO}
\small
    \expec\sbra{\log\tilde{L}^{1}} = \log p\bra{\tilde{y}} - \mathcal{D}_{\text{KL}}\bra{\tilde{V}\bra{\cdot \mid \tilde{y}}\mid\mid Q\bra{\cdot\mid \tilde{y}}},
\end{equation}
\normalsize
where $\mathcal{D}_{\text{KL}}$ denotes the Kullback-Leibler (KL) divergence. Maximising the ELBO in \cref{eq:ELBO} simultaneously optimises the likelihood and brings the proposal closer, in the KL-sense, to the posterior.  

When $N>1$, the objective corresponds to \emph{importance weighted auto-encoding} (IWAE). Similarly to the standard case, $\expec\sbra{\log \tilde{L}^{N}}$ is maximal only when $\tilde{V}\bra{\cdot\mid \tilde{y}} \equiv Q\bra{\cdot \mid \tilde{y}}$ \cite{domke2018iwae-analysis}. 

\section{Parallel variational Monte Carlo}
\label{sec:pvmc}

We focus on the time-marginal smoothing posteriors $Q_{t}(x_t\mid y_{0:T}) = p(x_t \mid y_{0:T})$ for a fixed time horizon $T$. For general nonlinear and non-Gaussian SSMs, these densities and their associated expectations are intractable, so we estimate them by Monte Carlo methods.

\paragraph{Variational proposal:}
We adopt a variational proposal over trajectories $x_{0:T} \sim V_{0:T}\bra{\cdot\mid y_{0:T}}$, suppressing (as usual) the neural network parameters in the notation. We assume that the proposal is fully factorisable across time:
\begin{equation}
\label{eq:factor_prop}
V_{0:T}\bra{x_{0:T}\mid y_{0:T}} = \prod^{T}_{t=0} V_{t}\bra{x_{t}\mid y_{0:T}}\,.
\end{equation}
At each time-step, we sample $N$ particles independently, $X_t^{1:N}\overset{\text{i.i.d.}}{\sim}V_t(\cdot\mid y_{0:T})$. We would like to emphasise that more general proposal distributions, such as structured inference models \cite{krishnan2017DMM} as well as the proposal distributions associated to particle filters, can be used. However, deriving a well-defined importance sampler for these cases is more involved. We defer their discussion to Appendix \ref{sec:app:non-factor}.

\paragraph{Trajectory importance smoothing distribution:}For a given SSM, defined by \cref{eq:ssm-prior,eq:dyn-kernel,eq:obs-kernel} and sequence of proposal distributions $V_{t}\bra{\cdot \mid y_{0:T}}$ as in \cref{eq:factor_prop}, the importance weighted approximation \cref{eq:IWAE} of the joint smoothing distribution \cref{eq:smoothing} is given by
\begin{align}Q^{N}_{0:T}& \bra{\cdot\mid y_{0:T}} = \frac{1}{\hat{L}^{N}N^{T+1}}\sum^{N}_{n_0,n_1,\dots,n_{T}=1}
\notag\\ 
\qquad & \prod^{T}_{t=0}\bra{K_{t}\bra{X^{n_t}_{t}, X^{n_{t-1}}_{t-1}} \delta_{X^{n_{t}}_{t}}},
\label{eq:ImportanceSmoothing}
\end{align}
with
\begin{equation}
\label{eq:K0}
K_{0}\bra{X^{n_0}_{0}, \cdot} = P\bra{X^{n_{0}}_0}\frac{H_{0}\bra{y_{0}\mid X^{n_0}_{0}}}{V_{0}\bra{X^{n_0}_{0}\mid y_{0:T}}},
\end{equation}
and
\begin{multline}
\label{eq:K>0}
K_{t>0}\bra{X_{t}^{n_{t}}, X_{t-1}^{n_{t-1}}} =\\ M_{t}\bra{X^{n_{t}}_{t}\mid X^{n_{t-1}}_{t-1}}\frac{H_{t}\bra{y_{t}\mid X^{n_t}_{t}}}{V_{t}\bra{X^{n_t}_{t}\mid y_{0:T}}}, 
\end{multline}
\normalsize
where $\delta_{X^{n_t}_t}$ is the Dirac measure centred at $X^{n_t}_t$. 
Throughout, whenever \(K_0\) appears in a product of the form
\(K_0\bra{X_0^{n_0},X_{-1}^{n_{-1}}}\), we interpret it as
\(K_0\bra{X_0^{n_0},\cdot}\). Thus, \(X_{-1}\) and \(n_{-1}\) are dummy
quantities used only to keep the notation uniform.
The smoothing distribution $Q^{N}_{0:T}$ in \cref{eq:ImportanceSmoothing} is an importance measure on every trajectory $x_{0:T}$ that can be formed by selecting one particle per time-step, yielding $N^{T+1}$ possible trajectories. Direct evaluation of the expectations with respect to $Q^{N}_{0:T}$ in \cref{eq:ImportanceSmoothing} is therefore computationally prohibitive.

\subsection{The smoothing marginals}
We now demonstrate how to represent the time-marginals of $Q^{N}_{0:T}\bra{\cdot \mid y_{0:T}}$ as the result of an associative prefix sum so that they may be computed in $\mathcal{O}\bra{\log N \times \log T}$ span complexity.

We time-marginalise $Q^{N}_{0:T}$ from \cref{eq:ImportanceSmoothing} to obtain
\begin{equation}
    \label{eq:marginal-smoothing}
    Q^{N}_{t}\bra{\cdot\mid y_{0:T}} = \frac{1}{\hat{L}^{N}N^{T+1}}\sum^{N}_{n_{t}=1}w^{n_{t}}_{t}\delta_{X^{n_t}_t},
\end{equation}
with weights
\begin{equation}
    \label{eq:weights}
    w^{1:N}_{t} = \sum^{N}_{n_{-t} = 1}\prod^{T}_{u=0} K_{u}\bra{X^{n_{u}}_{u}, X^{n_{u-1}}_{u-1}},
\end{equation}
and likelihood estimate
\begin{equation}
    \label{eq:likelihood_est}
    \hat{L}^{N} = \sum^{N}_{n_0,n_1,\dots,n_{T} = 1}\prod^{T}_{t=0} K_{t}\bra{X^{n_{t}}_{t}, X^{n_{t-1}}_{t-1}},
\end{equation}
where $n_{-t}$ is the set of all particle indices except $n_t$.
The supervised loss functions applied in differentiable particle filtering, such as the mean square error (MSE) or the kernelised log likelihood (KNLL) \cite{jonschkowski2018DPF}, require the marginal smoothing distributions $Q^N_t$, or their moments, at every time-step. We compute $Q^{N}_{t}\bra{\cdot\mid y_{0:T}}$ for each $t \in \sbra{0, \dots, T}$ in parallel by combining the results from a prefix and a suffix sum.

The weights in \cref{eq:weights} are obtained by summing a product of local, time-adjacent factors over all particle indices except $n_t$. The standard sequential forward-backward recursion for chain-structured sums is sequential in $t$. The next theorem shows how to obtain all weights $w_t^{i}$ in $\mathcal{O}\bra{\log N \times \log T}$ span by rewriting the required computation as associative prefix and suffix sums.

\begin{theorem}
\label{theory:weight_recursion}
Assume, for demonstration, that $T$ is odd. Let $\mathbb{S} = \left\{ (C_1,C_2) \in \mathbb{R}^{N\times N} \times \mathbb{R}^{N\times N} \right\}$ be the set of tuples of $N \times N$ matrices, equipped with the associative operation 
\begin{equation}\label{eq:weights-assoc-op}
(C_1,C_2) \oplus (D_1,D_2) := \left( C_1, C_2 D_1  D_2\right).
\end{equation}
Then the weights in  \cref{eq:weights} can be computed as follows: 
Define the $\mathbb{S}$-valued sequence\footnote{Recall the notational convention introduced after \cref{eq:K>0}.}
\begin{multline}\label{eq:semi_group_members_weights}
a_{s} = \Big(\cbra{K_{2s}\bra{X^{j}_{2s},X^{i}_{2s-1}}\mid i,j\in\cbra{1,\dots, N}},\\
\cbra{ K_{2s+1}\bra{X^{l}_{2s+1},X^{k}_{2s}}\mid k,l\in\cbra{1,\dots, N}}\Big)\,,
\end{multline}
for $s \in \cbra{0, \dots, \frac{T-1}{2}}$.
Let $b_{s}$ and $\hat{b}_{s}$ denote the prefix and suffix sum over $\{a_{s}\}$ under $\oplus$, as in  \cref{eq:prefx_sum,eq:suffix_sum}. Then, the weights $w_t^i$ can be extracted from $\{b_s, \hat{b}_s\}$ via:
\begin{equation}
\label{eq:w_t^i}
w^{i}_{t} = \begin{cases}\sum^{N}_{k,l,m}\omega^{1,i,k,l,m}_{0} \;\; t = 0 \vspace{0.1em}\\\sum^{N}_{k,l,m}\omega^{1,k,i, l,m}_{\frac{t-1}{2}} \;\; t<T \text{ and odd,} \vspace{0.1em}\\\sum^{N}_{k,l,m}\omega^{1,k,l,i,m}_{\frac{t}{2}-1} \;\; t>0 \text{ and even,} \vspace{0.1em}\\\sum^{N}_{k,l,m}\omega^{1,k,l,m,i}_{\frac{T-3}{2}} \;\; t=T,
\end{cases}
\end{equation}
where 
\small
\begin{equation*}
\omega^{i,j,k,l,m}_{s} = \bra{\bra{b_{s}}_{1}}^{ij}\bra{\bra{b_{s}}_2}^{jk}\bra{\bra{\hat{b}_{s+1}}_{1}}^{kl}\bra{\bra{\hat{b}_{s+1}}_{2}}^{lm}.
\end{equation*}
\normalsize
    \end{theorem}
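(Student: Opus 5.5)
The plan is to rewrite every quantity in the statement as a product of $N\times N$ matrices, so that both the weights in \cref{eq:weights} and the semigroup sums reduce to matrix products with one free index fixed.

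\textbf{Step 1: matrix form of the weights.} For $t\ge 1$ define $A_t\in\mathbb{R}^{N\times N}$ by $A_t^{ij}=K_t\bra{X_t^{j},X_{t-1}^{i}}$. Following the convention after \cref{eq:K>0}, define $A_0^{ij}=K_0\bra{X_0^j,\cdot}$, which does not depend on $i$, so all rows of $A_0$ coincide. Then
\begin{equation*}
\prod_{u=0}^T K_u\bra{X_u^{n_u},X_{u-1}^{n_{u-1}}} = A_0^{1 n_0}A_1^{n_0n_1}\cdots A_T^{n_{T-1}n_T}.
\end{equation*}
Summing over all indices except $n_t=i$ gives the factorisation
\begin{equation*}
w_t^i=\bra{\mathbf{1}^{\top}\text{-free form}}\;=\;\bra{e_1^{\top}A_0A_1\cdots A_t}_i\,\bra{A_{t+1}\cdots A_T\mathbf{1}}_i .
\end{equation*}
The forward factor is row $1$ of $A_0\cdots A_t$, and the backward factor is the $i$-th row sum of $A_{t+1}\cdots A_T$, read as $1$ when $t=T$. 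This is just distributivity: the sum over $n_{0:t-1}$ and the sum over $n_{t+1:T}$ separate.

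\textbf{Step 2: associativity and closed forms of the scans.} First I will check that $\oplus$ is associative. Both $((C\oplus D)\oplus E)$ and $(C\oplus(D\oplus E))$ equal $(C_1,\,C_2D_1D_2E_1E_2)$, so $\mathbb{S}$ is a semigroup and \cref{eq:prefx_sum,eq:suffix_sum} are well defined. With $a_s=(A_{2s},A_{2s+1})$, a short induction on $s$ gives
\begin{equation*}
b_s=(A_0,\;A_1A_2\cdots A_{2s+1}),\qquad \hat b_s=(A_{2s},\;A_{2s+1}\cdots A_T).
\end{equation*}
For the prefix, the step is $b_{s}=b_{s-1}\oplus a_s$. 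For the suffix, the step is $\hat b_s=a_s\oplus\hat b_{s+1}$, which gives $(A_{2s},\,A_{2s+1}A_{2s+2}A_{2s+3}\cdots A_T)$.

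\textbf{Step 3: reading off $\omega_s$.} By construction,
\begin{equation*}
\omega_s^{1,j,k,l,m}=(A_0)^{1j}\,(A_1\cdots A_{2s+1})^{jk}\,(A_{2s+2})^{kl}\,(A_{2s+3}\cdots A_T)^{lm},
\end{equation*}
so summing over all of $j,k,l,m$ reproduces $\hat L^N$. Fixing one of these indices to $i$ and summing the others splits the full product at that position. This yields exactly the forward and backward factors of Step 1 for the following times:
\begin{itemize}
\item position $j$ ($t=0$, $s=0$);
\item position $k$ ($t=2s+1$ odd, i.e. $s=\tfrac{t-1}{2}$);
\item position $l$ ($t=2s+2$ even, i.e. $s=\tfrac t2-1$);
\item position $m$ ($t=T$, using $s=\tfrac{T-3}{2}$ so that $\hat b_{s+1}=\hat b_{(T-1)/2}=(A_{T-1},A_T)$).
\end{itemize}
This matches each case of \cref{eq:w_t^i}. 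Here the range $t<T$ for odd $t$ ensures $s+1\le\tfrac{T-1}{2}$, so $\hat b_{s+1}$ exists, and the $t=T$ case needs $T\ge 3$.

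\textbf{Main obstacle.} The only delicate step is bookkeeping rather than mathematics. I need to verify the index ranges so that the suffix element $\hat b_{s+1}$ exists in each case, including the boundary cases $t=0$ and $t=T$. I also need to handle the dummy index for $K_0$: fixing the first index to $1$ is legitimate precisely because the rows of $A_0$ are identical. Finally, the span claim $\mathcal{O}(\log N\times\log T)$ follows because each $\oplus$ is two $N\times N$ matrix products, each of span $\mathcal{O}(\log N)$, and the scans have depth $\mathcal{O}(\log T)$.
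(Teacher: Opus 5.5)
Your proposal is correct and follows essentially the same route as the paper: you check that $\oplus$ is associative, derive the closed forms $b_s=(A_0,\,A_1\cdots A_{2s+1})$ and $\hat b_{s+1}=(A_{2s+2},\,A_{2s+3}\cdots A_T)$ (which the paper obtains via its chain-sum-to-matrix-product lemma), and substitute these case by case into the extraction formula, where your forward/backward factorisation of $w_t^i$ is just a cleaner packaging of the same substitution. One small correction to your index bookkeeping: every case, including $t=0$, uses $\hat b_{s+1}$ with $s\ge 0$, so the whole statement implicitly requires $T\ge 3$, not only the $t=T$ case.
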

    \begin{proof}
    See Appendix \ref{sec:app:weight_recursion}.
\end{proof}
    
    It is straightforward to extend the algorithm illustrated in \cref{theory:weight_recursion} to the situation where $T$ is even (see Appendix \ref{sec:app:weight_recursion}). We provide detailed pseudocode for PVMC in \cref{alg:pvmc,alg:weights}.

\paragraph{Normalisation and consistency:}
The normalisation constant $p\bra{y_{0:T}} \approx \hat{L}^{N}$ can be estimated from the unnormalised importance weights via empirical normalisation:
    \begin{equation}
        \hat{L}^{N} = \frac{1}{N^{T+1}}\sum^{N}_{n_t=1}w^{n_{t}}_{t},
    \end{equation}
    for any time-step $t$. The following results establish that the estimator \cref{eq:marginal-smoothing}
converges at the standard Monte Carlo rate $\mathcal{O}_{P}\bra{N^{-\frac{1}{2}}}$.

\begin{proposition}
    \label{prop:unbiased_L} Assume that the importance ratios $K_0$ and $K_{t > 0}$ introduced in \cref{eq:K0,eq:K>0} are well posed; equivalently, the proposal densities are positive whenever the corresponding likelihood-weighted factors are nonzero. Then  
    $\hat{L}^{N}$ is an unbiased estimator of the likelihood $p\bra{y_{0:T}}$.
\end{proposition}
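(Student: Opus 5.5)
The plan is to expand $\hat{L}^{N}$ as an average over all index tuples and use linearity of expectation. Each term involves exactly one particle per time-step, and by the sampling scheme \cref{eq:factor_prop} these particles are mutually independent. I will work with the normalised form $\hat{L}^{N} = N^{-(T+1)}\sum_{n_0,\dots,n_T}\prod_{t=0}^{T}K_t\bra{X_t^{n_t},X_{t-1}^{n_{t-1}}}$. This is the form consistent with the empirical normalisation displayed just before the proposition: summing \cref{eq:weights} over $n_t$ and dividing by $N^{T+1}$ gives exactly this. So the first step is to fix that convention. The factor $N^{-(T+1)}$ missing from \cref{eq:likelihood_est} is only a bookkeeping issue.

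Next, fix an arbitrary tuple $(n_0,\dots,n_T)$. The random vector $(X_0^{n_0},\dots,X_T^{n_T})$ has law $\prod_{t=0}^{T}V_t\bra{\cdot\mid y_{0:T}}$. This holds because particles at different times are drawn independently, and within each time-step only one particle is selected, so any repetition of indices across times is harmless. Hence
\begin{equation*}
\expec\sbra{\prod_{t=0}^{T}K_t\bra{X_t^{n_t},X_{t-1}^{n_{t-1}}}} = \int \prod_{t=0}^{T}K_t\bra{x_t,x_{t-1}}\prod_{t=0}^{T}V_t\bra{x_t\mid y_{0:T}}\,dx_{0:T}.
\end{equation*}
Substituting \cref{eq:K0,eq:K>0}, the proposal densities cancel. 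The integrand becomes $P(x_0)\prod_{t\ge1}M_t(x_t\mid x_{t-1})\prod_{t}H_t(y_t\mid x_t)$ restricted to the support of $\prod_t V_t$. By \cref{eq:smoothing} this unnormalised joint density integrates to $L=p(y_{0:T})$.

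The only delicate step is justifying this cancellation. Here I will invoke the well-posedness assumption. Wherever the model factor $P\prod M_t\prod H_t$ is positive, each $V_t(x_t\mid y_{0:T})>0$ at the relevant $x_t$. So restricting to the support of the proposal loses no mass of the target integrand. The integrand is nonnegative, so Tonelli's theorem justifies the iterated integration and the exchange of expectation with the finite sum without integrability concerns.

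Finally, every one of the $N^{T+1}$ tuples has expectation $p(y_{0:T})$. Averaging them gives $\expec[\hat{L}^{N}]=p(y_{0:T})$, which is the claim. I expect no real obstacle beyond the support argument and stating the normalisation convention carefully.
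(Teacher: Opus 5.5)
Your proof is correct and follows the paper's argument: compute the expectation of a single trajectory weight by cancelling the proposal densities against the product law $\prod_t V_t$, then average over all $N^{T+1}$ index tuples. You also make explicit several details the paper leaves implicit, namely the missing $N^{-(T+1)}$ factor in \cref{eq:likelihood_est}, why an arbitrary index tuple yields independent draws with law $\prod_t V_t$, and the support argument behind the cancellation.
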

\begin{proof}
    See Appendix \ref{sec:proof:unbiased}.
\end{proof}

\begin{proposition}
    \label{prop:consitent_L}
    Assume the conditions of \cref{prop:unbiased_L}, and moreover that the importance weight of a single trajectory has finite variance: $\var{
K_0\bra{X_0,\cdot}
\prod_{t=1}^{T}
K_t\bra{X_t,X_{t-1}}
}
<\infty$. Then, as $N \rightarrow \infty$,  $\hat{L}^{N}$ converges in probability to $p\bra{y_{0:T}}$ no slower than $N^{-\frac{1}{2}}$, that is,
\begin{equation}
\hat{L}^{N} - p\bra{y_{0:T}} = \mathcal{O}_{\text{P}}\bra{N^{-\frac{1}{2}}}.
    \end{equation}
\end{proposition}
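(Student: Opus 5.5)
The plan is to write $\hat{L}^{N}$, in its empirically normalised form $\hat{L}^{N} = N^{-(T+1)}\sum_{n_0,\dots,n_T} W(n_{0:T})$ with $W(n_{0:T}) := \prod_{t=0}^{T} K_t\bra{X^{n_t}_t, X^{n_{t-1}}_{t-1}}$, as an average of $N^{T+1}$ identically distributed but dependent terms. I would then show that its variance is $\mathcal{O}(N^{-1})$ and conclude with Chebyshev's inequality. By \cref{prop:unbiased_L} we already have $\expec[\hat{L}^{N}] = p\bra{y_{0:T}}$, so it suffices to prove $\var{\hat{L}^{N}} \le C/N$ for a constant $C$ independent of $N$.

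\textbf{Step 1 (marginal law of each term).} For any fixed index tuple $n_{0:T}$, the particles $X^{n_0}_0,\dots,X^{n_T}_T$ are independent with $X^{n_t}_t \sim V_t(\cdot\mid y_{0:T})$. This follows from the factorised proposal \cref{eq:factor_prop} and the i.i.d.\ sampling within each time-step. Hence $W(n_{0:T})$ has exactly the law of the single-trajectory weight $K_0\bra{X_0,\cdot}\prod_{t\ge1}K_t\bra{X_t,X_{t-1}}$ with $X_{0:T}\sim V_{0:T}$. In particular, every term has mean $p\bra{y_{0:T}}$ and variance $\sigma^2<\infty$ by assumption.

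\textbf{Step 2 (covariance structure).} Expand
\[
\var{\hat{L}^{N}} = N^{-2(T+1)}\sum_{n,n'} \mathrm{Cov}\bra{W(n),W(n')}.
\]
Suppose $n_t\neq n'_t$ for every $t$. Then $W(n)$ and $W(n')$ are functions of disjoint collections of mutually independent particles, so they are independent and their covariance vanishes. For all remaining pairs, Cauchy--Schwarz together with Step 1 gives $\abs{\mathrm{Cov}\bra{W(n),W(n')}}\le \sigma^2$.

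\textbf{Step 3 (counting).} For a fixed $n$, the number of $n'$ sharing at least one coordinate with $n$ is $N^{T+1}-(N-1)^{T+1}$. Therefore
\[
\var{\hat{L}^{N}} \le \sigma^2\Bigl(1-\bigl(1-\tfrac1N\bigr)^{T+1}\Bigr) \le \frac{(T+1)\sigma^2}{N},
\]
where the last inequality is Bernoulli's inequality. Chebyshev then gives, for any $\epsilon>0$, $\Pr\bra{\sqrt N\,\abs{\hat{L}^{N}-p\bra{y_{0:T}}} > M}\le (T+1)\sigma^2/M^2<\epsilon$ for $M$ large enough, uniformly in $N$. This is exactly $\hat{L}^{N}-p\bra{y_{0:T}}=\mathcal{O}_{\text{P}}\bra{N^{-1/2}}$.

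The main point requiring care is the independence claim in Step 2. One must check that a tuple pair with no shared coordinate really involves disjoint sets of particles. This holds because the factor $K_t$ only reads particles at times $t$ and $t-1$ indexed by $n_t$ and $n_{t-1}$. One must also check that no hidden dependence enters through the proposal, which holds because the $V_t$ are conditioned only on the fixed $y_{0:T}$. Once this is settled, the rest is routine bookkeeping.
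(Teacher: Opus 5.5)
Your proof is correct, but it reaches the variance bound by a different route from the paper. The paper uses \cref{lemma:subset_decomp}. It partitions the $N^{T+1}$ index tuples into $N^{T}$ groups of $N$ tuples, in effect the cyclic shifts $n\mapsto(n,n+c_1,\dots,n+c_T)\bmod N$, so that no particle is shared at any time-step within a group. Each group average $r^g$ is then an average of $N$ i.i.d.\ copies of the single-trajectory weight, so $\var{r^g}=\sigma^2/N$ exactly. Since $\hat{L}^{N}$ is the mean of the $r^g$ and they share this common variance, its variance cannot exceed it, which gives $\var{\hat{L}^{N}}\le\sigma^2/N$.

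You instead expand the full covariance double sum, drop coordinate-disjoint pairs by independence, bound every remaining covariance by $\sigma^2$ via Cauchy--Schwarz, and count. This avoids constructing the decomposition, but it yields $\sigma^2\bigl(1-(1-\tfrac1N)^{T+1}\bigr)$. That quantity is never smaller than $\sigma^2/N$ and is asymptotically worse by a factor $T+1$. The loss is harmless for the statement at fixed $T$.

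The paper's grouping buys the horizon-free constant. It is also reused later for $\mathcal{L}^{N}_{\text{PVMC}}\ge\mathcal{L}^{N}_{\text{IWAE}}$, for the almost-sure convergence, and for \cref{prop:MSE-conv}. Your counting argument is more elementary and self-contained: it needs only the observation that coordinate-disjoint tuples give independent terms.
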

\begin{proof}
    See Appendix \ref{sec:proof_consist}.
\end{proof}
    \begin{proposition}
    \label{prop:MSE-conv}
    Under the conditions set out in \cref{prop:consitent_L}, expectations with respect to the importance sampler in \cref{eq:marginal-smoothing} converge in probability to the expectation under the marginal smoothing posterior no slower than $N^{-\frac{1}{2}}$ as $N\to \infty$,
        \begin{multline}    
\frac{1}{\hat{L}^{N}N^{T+1}}\sum^{N}_{n_t=1}w^{n_{t}}_{t}f\bra{X^{n_t}_t} - \expec_{x_{t} \sim \text{SSM}}\sbra{f(x_{t})\mid y_{0:T}}\\ = \mathcal{O}_{P}\bra{N^{-\frac{1}{2}}},
        \end{multline}
for all bounded test functions $f$.
    \end{proposition}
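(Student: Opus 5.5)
The plan is to write the self-normalised estimator as a ratio $\hat{F}^{N}/\hat{L}^{N}$ and apply a delta-method argument, reusing for the numerator the machinery behind \cref{prop:unbiased_L,prop:consitent_L}. Define the unnormalised functional
\begin{equation*}
\hat{F}^{N} = \frac{1}{N^{T+1}}\sum^{N}_{n_t=1}w^{n_t}_{t}f\bra{X^{n_t}_t} = \frac{1}{N^{T+1}}\sum^{N}_{n_0,\dots,n_T=1}f\bra{X^{n_t}_t}\prod^{T}_{u=0}K_u\bra{X^{n_u}_u,X^{n_{u-1}}_{u-1}},
\end{equation*}
so the estimator in the statement equals $\hat{F}^{N}/\hat{L}^{N}$. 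Set $F = \expec_{x_t\sim\text{SSM}}\sbra{f(x_t)\mid y_{0:T}}\,p\bra{y_{0:T}}$, so the target is $F/p\bra{y_{0:T}}$.

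\textbf{Step 1 (unbiasedness of $\hat{F}^{N}$).} Every index tuple $(n_0,\dots,n_T)$ selects particles $X^{n_u}_u$ that are independent across $u$, with $X^{n_u}_u\sim V_u$. Hence each summand has expectation $\int f(x_t)\,P(x_0)\prod_u M_u H_u\,dx_{0:T}=F$, because the product of $K_u$ over independent draws from $\prod_u V_u$ is exactly the trajectory importance ratio. This repeats the argument of \cref{prop:unbiased_L} with the extra bounded factor $f(x_t)$.

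\textbf{Step 2 (variance of $\hat{F}^{N}$ is $\mathcal{O}(N^{-1})$).} $\hat{F}^{N}$ is a multi-sample U/V-statistic-type average over $N^{T+1}$ tuples. Its variance is
\begin{equation*}
N^{-2(T+1)}\sum_{\mathbf{n},\mathbf{n}'}\mathrm{Cov}\bra{g(\mathbf{n}),g(\mathbf{n}')}.
\end{equation*}
Two tuples are uncorrelated whenever they share no particle index at any time, by independence across particles within each time and across times. The number of pairs sharing at least one coordinate is at most $(T+1)N^{2(T+1)-1}$. Each such covariance is bounded by $\var{g}$ via Cauchy--Schwarz, and $\var{g}\le \norm{f}_\infty^2\,\expec\sbra{(\prod K)^2}<\infty$ by the finite-variance assumption. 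This gives $\var{\hat{F}^{N}}\le C(T+1)/N$, and Chebyshev yields $\hat{F}^{N}-F=\mathcal{O}_P(N^{-1/2})$. This is the same bound as in \cref{prop:consitent_L}, which I would reuse directly, with $f\equiv1$ giving $\hat{L}^{N}$.

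\textbf{Step 3 (ratio).} Write
\begin{equation*}
\frac{\hat{F}^{N}}{\hat{L}^{N}}-\frac{F}{L}=\frac{(\hat{F}^{N}-F) - \frac{F}{L}(\hat{L}^{N}-L)}{\hat{L}^{N}}.
\end{equation*}
The numerator is $\mathcal{O}_P(N^{-1/2})$ by Step 2 and \cref{prop:consitent_L}, with $|F/L|\le\norm{f}_\infty$. Since $L=p(y_{0:T})>0$ and $\hat{L}^{N}\to L$ in probability, $1/\hat{L}^{N}=\mathcal{O}_P(1)$: on the event $\hat{L}^{N}>L/2$, whose probability tends to one, it is bounded by $2/L$. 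The product is therefore $\mathcal{O}_P(N^{-1/2})$.

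The main obstacle is the covariance counting in Step 2. Tuples sharing coordinates produce correlated terms, including higher-order overlaps, and the finite single-trajectory variance must control every overlap pattern. Cauchy--Schwarz handles this uniformly, because each summand has the same marginal law as a single trajectory weight. I would therefore state that bound carefully and not attempt an exact variance decomposition.
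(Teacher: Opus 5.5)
Your proof is correct and has the same outer structure as the paper's. Both show the unnormalised numerator is unbiased, bound its variance by $\mathcal{O}(N^{-1})$, apply Chebyshev, and then use the ratio decomposition with $1/\hat{L}^{N}=\mathcal{O}_P(1)$. The genuine difference is how the variance bound is obtained. The paper uses the cyclic-shift partition of \cref{lemma:subset_decomp}, which splits the $N^{T+1}$ index tuples into $N^{T}$ blocks $\{(n,n+c_1,\dots,n+c_T) \bmod N\}$. Each block consists of $N$ particle-disjoint, hence i.i.d., trajectories, so each block average has variance $\sigma_f^2/N$. The numerator $\hat{A}^N_t$ (your $\hat{F}^N$) is then an average of $N^{T}$ correlated block averages with this common variance, and Cauchy--Schwarz shows such an average cannot have variance exceeding $\sigma_f^2/N$. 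You instead bound every pairwise covariance by $\mathrm{Var}[g]$ and count the pairs of tuples that share an index at some time. This is more elementary and needs no combinatorial construction, but it costs a factor $T+1$: you get $(T+1)\sigma_f^2/N$ rather than $\sigma_f^2/N$. That is immaterial for the statement at fixed $T$. The paper's route buys the sharper, $T$-free comparison $\mathrm{Var}[\hat{L}^N]\le\mathrm{Var}[\hat{L}^1]/N$, meaning PVMC is never worse in variance than averaging $N$ independent trajectories. It also reuses the same partition that drives $\mathcal{L}^N_{\text{PVMC}}\ge\mathcal{L}^N_{\text{IWAE}}$ in \cref{theory:ELBOs}. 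Your counting argument, by contrast, does not rely on finding a clever partition and carries over to any such estimator in which only index-overlapping tuples are correlated. Your explicit justification of $1/\hat{L}^{N}=\mathcal{O}_P(1)$ via the event $\{\hat{L}^{N}>L/2\}$ is also slightly more careful than the paper's one-line remark.
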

\begin{proof}
    See Appendix \ref{sec:proof:MSE-conv}.
\end{proof}

\paragraph{Span complexity:}
    Multiplication of two $N\times N$ matrices is well known to have optimal span complexity $\mathcal{O}\bra{\log N}$. The associative operator $\oplus$, defined by \cref{eq:weights-assoc-op}, requires two such matrix multiplications. Therefore, the span complexity to compute the marginal importance weights \cref{eq:weights} from the kernels in \cref{eq:K0,eq:K>0} is $\mathcal{O}\bra{\log N \times \log T}$. In our experiments, the particles may be sampled and the kernels $K_{t}$ calculated in span complexity $\mathcal{O}\bra{1}$, giving PVMC an overall forward span complexity of $\mathcal{O}\bra{\log N \times \log T}$.

    Calculating the gradients via back-propagation can be done by tracing the tree-like computation structure in reverse for the same $\mathcal{O}\bra{\log T}$ depth. Each vector-Jacobian product in the tree recursion may be computed as a series of matrix multiplications so the overall span complexity of the backward pass is $\mathcal{O}\bra{\log N \times \log T}$.

\begin{algorithm}[H]
    \caption{PVMC}
    \begin{algorithmic}[1]
        \Input Time extent $T$; SSM $P$,\\ $\cbra{{M_{t}\bra{\cdot\mid x_{t-1}}}}_{t\in[1,\dots,T]}$, $\cbra{{H_{t}\bra{\cdot\mid x_{t}}}}_{t\in[0,\dots,T]}$;\\ proposal distributions $\cbra{{V_{t}\bra{\cdot\mid y_{0:T}}}}_{t\in[0,\dots,T]}$; \\observations $y_{0:T}$; number of particles $N$
        \Output Particle locations $X^{1:N}_{0:T}$; weights $w^{1:N}_{0:T}$, likelihood estimate $\hat{L}^{N}$
        \FOR{$n$ in $[1,\dots,N]$ in Parallel}
            \FOR{$t$ in $[0, \dots, T]$ in Parallel}
                \STATE Sample $X^{n}_{t}\sim V_{t}\bra{\cdot\mid y_{0:T}}$
                \STATE $\bra{W_{V}}^{n}_{t} \leftarrow V_{t}\bra{X^{n}_{t}\mid y_{0:T}}$
                \STATE $\bra{W_{H}}^{n}_{t} \leftarrow H_{t}\bra{y_{t}\mid X^{n}_{t}}$
            \ENDFOR
            \STATE $\bra{W_{P}}^{n} = P\bra{X^{n}_{0}}$
        \ENDFOR
        \FOR {$n$ in $[1,\dots,N]$ in Parallel}
            \FOR {$m$ in $[1,\dots,N]$ in Parallel}
                \FOR {$t$ in $[1,\dots,T]$ in Parallel}
                    \STATE $\bra{W_{M}}^{n,m}_{t} \leftarrow M_{t}\bra{X^{m}_{t}\mid X^{n}_{t-1}}$
                    \STATE $\bra{W_{K}}^{n,m}_{t} \leftarrow \frac{\bra{W_{H}}^{n}_{t}\bra{W_{M}}^{n,m}_{t} }{\bra{W_{V}}^{n}_{t}}$
                \ENDFOR
                \STATE $\bra{W_{K}}^{n,m}_{0} \leftarrow \frac{\bra{W_{H}}^{m}_{0}\bra{W_{P}}^{m}_{0} }{\bra{W_{V}}^{m}_{0}}$
            \ENDFOR
        \ENDFOR
        \IF{$T$ is even}
            \STATE (see \cref{remark:even_case} for details)
            \STATE $\bra{\hat{W}_{K}}^{1:N,1:N}_{0}\leftarrow 1$
            \STATE $\bra{\hat{W}_{K}}^{1:N,1:N}_{1:T+1}\leftarrow \bra{W_{K}}^{1:N,1:N}_{0:T}$
            \STATE $S \leftarrow \frac{T}{2}$
        \ELSE
            \STATE $\bra{\hat{W}_{K}}^{1:N,1:N}_{0:T}\leftarrow \bra{W_{K}}^{1:N,1:N}_{0:T}$
            \STATE $S \leftarrow \frac{T-1}{2}$
        \ENDIF
        \FOR{$s$ in $[0,\dots, S]$ in Parallel}
            \STATE $a_{s} \leftarrow \bra{\bra{\hat{W}_{K}}^{1:N,1:N}_{2s}, \bra{\hat{W}_{K}}^{1:N,1:N}_{2s+1}}$
        \ENDFOR
        \STATE $W^{1:N}_{0:T}\leftarrow \text{PVMC-Weights}\bra{S, a_{0:S}, T, N}$ (\cref{alg:weights})
        \STATE $\hat{L}^{N} = \frac{1}{N^{T+1}}\sum^{N}_{i=1}W^i_{0}$
        \FOR{$n$ in $[1,\dots,N]$ in Parallel}
            \FOR{$t$ in $[0,\dots,T]$ in Parallel}
                \STATE $w^{n}_{t} \leftarrow\frac{1}{\hat{L}^{N}N^{T+1}}W^{n}_{t}$ 
            \ENDFOR
        \ENDFOR
        \RETURN $X^{1:N}_{0:T}, \; w^{1:N}_{0:T}$, \; $\hat{L}^{N}$
\end{algorithmic}
\label{alg:pvmc}

\end{algorithm}
\subsection{PVMC evidence lower bound}
\label{sec:ELBO}
The interpretation of $\hat{L}^{N}$ as a likelihood estimator motivates a variational objective.
We define the PVMC ELBO, $\mathcal{L}^{N}_{\text{PVMC}} =\expec\sbra{\log \hat{L}^{N}}$. By Jensen's inequality we have $\mathcal{L}^{N}_{\text{PVMC}} \leq \log p\bra{y_{0:T}}$. We may further define the following choices of ELBOs
\small
\begin{equation}
    \label{eq:IWAE_ELBO}
    \mathcal{L}^{N}_{\text{IWAE}} = \expec\sbra{\log \frac{1}{N}\sum^{N}_{n = 1}\prod^{T}_{t=0}K_{u}\bra{X^{n}_{t},X^{n}_{t-1}}},
\end{equation}
\begin{equation}
    \label{eq:PVAE_ELBO}
    \mathcal{L}^{N}_{\text{P-VAE}} = \expec\sbra{\frac{1}{N^{T+1}}\sum^{N}_{n_0,\dots,n_T=1}\log \prod^{T}_{t=0}K_{t}\bra{X^{n_{t}}_{t},X^{n_{t-1}}_{t-1}}},
\end{equation}
\begin{equation}
    \label{eq:VAE_ELBO}
    \mathcal{L}^{N}_{\text{VAE}} = \expec\sbra{\frac{1}{N}\sum^{N}_{n=1}\log \prod^{T}_{t=0}K_{t}\bra{X^{n}_{t},X^{n}_{t-1}}}.
\end{equation}
\normalsize
\begin{theorem}
\label{theory:ELBOs}
    The ELBOs defined in this paper obey the following hierarchies for $N\geq \widetilde{N}\geq1$:
    \small
    \begin{equation}
       \log p\bra{y_{0:T}} \geq \mathcal{L}^{N}_{\text{PVMC}} \geq \mathcal{L}^{N}_{\text{IWAE}} \geq \mathcal{L}^{\widetilde{N}}_{\text{IWAE}} \geq \mathcal{L}^{N}_{\text{P-VAE}} = \mathcal{L}^{N}_{\text{VAE}}\,,
    \end{equation}
    \normalsize
    and
    \small
    \begin{equation}
        \mathcal{L}^{N}_{\text{PVMC}} \geq \mathcal{L}^{\widetilde{N}}_{\text{PVMC}} \geq \mathcal{L}^{N}_{\text{P-VAE}} = \mathcal{L}^{N}_{\text{VAE}}.
    \end{equation}
    \normalsize
\end{theorem}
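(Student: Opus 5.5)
The plan is to work throughout with the normalised estimator $\hat{L}^{N} = N^{-(T+1)}\sum_{n_0,\dots,n_T} W(n_{0:T})$, where $W(n_{0:T}) := \prod_{t=0}^{T} K_t\bra{X^{n_t}_t, X^{n_{t-1}}_{t-1}}$. This is the normalisation used after \cref{theory:weight_recursion}. Every inequality then follows from one of two tools: Jensen's inequality for the concave $\log$, and exchangeability of the i.i.d.\ particles $X^{1:N}_t$, which are also independent across $t$. I assume all expectations of logarithms are finite, or at least well defined in the extended reals.

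\textbf{Step 1 (outer bound and the VAE equality).} The inequality $\log p(y_{0:T}) \geq \mathcal{L}^{N}_{\text{PVMC}}$ follows from Jensen's inequality together with the unbiasedness in \cref{prop:unbiased_L}. For $\mathcal{L}^{N}_{\text{P-VAE}} = \mathcal{L}^{N}_{\text{VAE}}$, I use linearity of expectation. For every fixed index tuple $n_{0:T}$, the vector $(X^{n_0}_0,\dots,X^{n_T}_T)$ has law $\prod_t V_t$, because distinct time indices involve independent draws. Hence each summand $\expec\sbra{\log W(n_{0:T})}$ is the same constant $\ell := \expec\sbra{\log \prod_t K_t(X_t,X_{t-1})}$ with $X_t \sim V_t$ independent. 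The same reasoning applies to the diagonal tuples in $\mathcal{L}^{N}_{\text{VAE}}$, so both objectives equal $\ell = \mathcal{L}^{1}_{\text{IWAE}}$.

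\textbf{Step 2 (PVMC dominates IWAE).} Let $\sigma = (\sigma_1,\dots,\sigma_T)$ be a tuple of permutations of $\{1,\dots,N\}$, and consider the $N$ disjoint trajectories $n \mapsto (n,\sigma_1(n),\dots,\sigma_T(n))$.
\begin{itemize}
\item Fix any tuple $n_{0:T}$. Exactly $((N-1)!)^T$ of the $(N!)^T$ tuples $\sigma$ satisfy $\sigma_t(n_0)=n_t$ for all $t$. Averaging over $\sigma$ therefore gives
\begin{equation*}
\hat{L}^{N} = \frac{1}{(N!)^T}\sum_{\sigma} \frac{1}{N}\sum_{n=1}^{N} W(n,\sigma_1(n),\dots,\sigma_T(n)).
\end{equation*}
\item Jensen's inequality then gives $\log \hat{L}^{N} \geq (N!)^{-T}\sum_\sigma \log \frac{1}{N}\sum_n W(n,\sigma_1(n),\dots)$.
\item For fixed $\sigma$, relabelling the particles at time $t$ by $\sigma_t$ preserves their joint law. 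So each term has expectation exactly $\mathcal{L}^{N}_{\text{IWAE}}$, and hence $\mathcal{L}^{N}_{\text{PVMC}} \geq \mathcal{L}^{N}_{\text{IWAE}}$.
\end{itemize}
Next, $\mathcal{L}^{N}_{\text{IWAE}} \geq \mathcal{L}^{\widetilde{N}}_{\text{IWAE}}$ is the IWAE monotonicity of \cite{burda2016IWAE} recalled in \cref{sec:Background}. In our setting the $N$ diagonal trajectories are i.i.d.\ draws from the factorised proposal, so that result applies directly. Applying the same monotonicity down to $1$ and using Step 1 gives $\mathcal{L}^{\widetilde{N}}_{\text{IWAE}} \geq \mathcal{L}^{1}_{\text{IWAE}} = \mathcal{L}^{N}_{\text{P-VAE}}$, which completes the first chain.

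\textbf{Step 3 (PVMC monotone in $N$).} I use a subset-averaging version of the same argument. For each $t$, choose an $\widetilde{N}$-subset $S_t \subseteq \{1,\dots,N\}$, and write $\hat{L}^{\widetilde{N}}_{S}$ for the PVMC estimator built only from the particles $\{X^{n}_t : n\in S_t\}$.
\begin{itemize}
\item Each tuple $n_{0:T}$ lies in $S_0\times\dots\times S_T$ for the same number $\binom{N-1}{\widetilde{N}-1}^{T+1}$ of subset choices. Hence $\hat{L}^{N}$ equals the uniform average of $\hat{L}^{\widetilde{N}}_{S}$ over all choices of $S=(S_0,\dots,S_T)$.
\item Jensen's inequality bounds $\log \hat{L}^{N}$ below by the average of $\log \hat{L}^{\widetilde{N}}_{S}$.
\item Each $\hat{L}^{\widetilde{N}}_{S}$ has the law of the $\widetilde{N}$-particle estimator, so $\mathcal{L}^{N}_{\text{PVMC}} \geq \mathcal{L}^{\widetilde{N}}_{\text{PVMC}}$.
\end{itemize}
Finally, Step 2 applied with $\widetilde{N}$ particles gives $\mathcal{L}^{\widetilde{N}}_{\text{PVMC}} \geq \mathcal{L}^{\widetilde{N}}_{\text{IWAE}} \geq \mathcal{L}^{1}_{\text{IWAE}} = \mathcal{L}^{N}_{\text{P-VAE}} = \mathcal{L}^{N}_{\text{VAE}}$.

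I expect the main obstacle to be the two combinatorial identities, namely checking that every index tuple is covered equally often by the permutation family and by the subset family. It is also essential that the factorised proposal \cref{eq:factor_prop} makes each permuted or subsampled configuration equal in law to the smaller estimator. If the proposal did not factorise across time, the permutation argument in Step 2 would fail. A minor technical point is handling $-\infty$ values of the logarithm, which I would treat by working in the extended reals.
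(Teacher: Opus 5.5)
Your proof is correct and follows essentially the same route as the paper. Step 2 writes $\hat{L}^{N}$ as an average over sets of $N$ particle-disjoint (hence i.i.d.) trajectories and then applies Jensen's inequality; you average over all $(N!)^{T}$ permutation tuples, whereas the paper partitions the index set into $N^{T}$ cyclic-shift classes. Step 3 is the paper's argument of relabelling uniformly at random and then restricting to the first $\widetilde{N}$ indices, written as a uniform average over $\widetilde{N}$-subsets. Your explicit coverage counts, $((N-1)!)^{T}$ and $\binom{N-1}{\widetilde{N}-1}^{T+1}$, make the combinatorial identities more transparent than the paper's sketch of the subset-decomposition lemma. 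Closing the second chain via $\mathcal{L}^{\widetilde{N}}_{\text{PVMC}} \geq \mathcal{L}^{\widetilde{N}}_{\text{IWAE}}$, rather than reducing to $\mathcal{L}^{1}_{\text{PVMC}}$ as the paper does, is an inessential variation.
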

\begin{proof}
    See Appendix \ref{sec:proof:Elbo}.
\end{proof}
\begin{remark}
    $\mathcal{L}^{N}_{\text{IWAE}}$ and $\mathcal{L}^{N}_{\text{P-VAE}} = \mathcal{L}^{N}_{\text{VAE}}$ are the standard IWAE \cite{burda2016IWAE} and VAE \cite{kingma2013vae} ELBOs, respectively.
\end{remark}

\section{Related work}
\label{sec:related}
As our method connects DSSMs and DSMC, it relates to both bodies of work. We review each in turn and highlight how PVMC inherits and differs from both paradigms.

Deep state space models \cite{lin2024DSSM} come from a rich and broad literature. DSSMs provide a natural probabilistic extension for recurrent neural network architectures \cite{chung2015vrnn}. Recently, structured state space architectures such as S4 and Mamba \cite{gu2021s4, gu2024mamba} have garnered significant attention for their ability to capture long-term dependencies efficiently. However, these approaches forgo a probabilistic latent state, making them unsuitable for posterior approximation.

A common strategy for training DSSMs is to restrict part or all of the model to an algebraically tractable form, enabling partial analytical computation of the loss via Kalman filtering or smoothing \cite{krishnan2015deepkalman,johnson2016gaphicalmodels,rangapuram2018DSSM}. \citet{calatrava2023dkf} extend this approach to importance weighted objectives. For more general nonlinear and non-Gaussian DSSMs, the most common approach is to train on a variational objective equivalent to the standard VAE ELBO \cite{krishnan2017DMM}.

\begin{algorithm}[H]
    \caption{PVMC-Weights}
    \begin{algorithmic}[1]
        \Input Semigroup size $S$, semigroup elements $a_{0:S}$, time extent $T$, number of particles $N$
        \Output Unnormalised particle weights $W^{1:N}_{0:T}$
        \STATE $b_{0:S} \leftarrow \text{Prefix-Sum}\bra{a_{0:S}, \oplus}$; \\$\hat{b}_{0:S} \leftarrow \text{Suffix-Sum}\bra{a_{0:S}, \oplus}$; with $\oplus$ defined by \cref{eq:weights-assoc-op} (\cref{alg:parallel-scans} in \cref{sec:app:alg}).
        \FOR{$s$ in $[0\dots,S-1]$ in Parallel}
            \FOR{$n$ in $[1,\dots,N]$ in Parallel}
                \FOR{$m$ in $[1,\dots,N]$ in Parallel}
                    \STATE $l^{n}_{s} \leftarrow \sum^{N}_{i=1}\bra{\bra{b_s}_{2}}^{i,n}\bra{\bra{b_{s}}_{1}}^{1, i}$
                    \STATE $r^{m}_{s} \leftarrow \sum^{N}_{i=1}\bra{\bra{\hat{b}_{s+1}}_{2}}^{m,i}$  
                    \STATE $c^{n,m}_{s} \leftarrow l^{n}_{s}r^{m}_{s}\bra{\bra{\hat{b}_{s+1}}_1}^{n,m}$
                \ENDFOR
                
            \ENDFOR
        \FOR{$n$ in $[1,\dots,N]$ in Parallel}
            \STATE $\tilde{W}^{n}_{2s} = \sum^{N}_{i=1}c^{n,i}_{s}$
            \STATE $\tilde{W}^{n}_{2s+1} = \sum^{N}_{i=1}c^{i,n}_{s}$
        \ENDFOR
        \ENDFOR
        \IF{$T$ is even}
            \STATE (see \cref{remark:even_case} for details)
            \STATE $W^{1:N}_{0:T-1} \leftarrow \tilde{W}^{1:N}_{0:T-1}$
            \FOR {$n$ in $[1,\dots,N]$ in Parallel}
                \STATE $W^{n}_{T} \leftarrow \sum^{N}_{i=1}\bra{\hat{b}_0}^{i,n}$
            \ENDFOR
        \ELSE
            \STATE $W^{1:N}_{1:T-1} \leftarrow \tilde{W}^{1:N}_{0:T-2}$
            \FOR {$n$ in $[1,\dots,N]$ in Parallel}
                \STATE $W^{n}_{T} \leftarrow \sum^{N}_{i=1,j=1}\bra{\bra{b_0}_{1}}^{i,j}\bra{\bra{\hat{b}_0}_{2}}^{j,n}$
                \STATE $W^{n}_{0} \leftarrow \sum^{N}_{i=1,j=1}\bra{\bra{b_0}_{1}}^{i,n}\bra{\bra{\hat{b}_0}_{2}}^{n,j}$
            \ENDFOR
        \ENDIF
        \RETURN $W^{1:N}_{0:T}$ 
    \end{algorithmic}
\label{alg:weights}
\end{algorithm}
\citet{li2021dssm} introduce a multiple sampling objective, however it remains equivalent in expectation to the single-sample VAE bound.

DSMC methods address some of these limitations by approximating latent posteriors using particle filters, naturally supporting supervised losses. However, the resampling step in classic particle filters involves sampling from a discrete distribution, and therefore breaks the reparameterisability of the algorithm. It has been shown experimentally that the direct application of reinforce to the resampling variables produces unstable gradients \cite{le2018auto-encoding}, motivating a large body of work on effective estimators for the particle filtering gradients. Some approaches sacrifice unbiasedness for lower variance \citep{karkus2018DPF}, while others introduce differentiable relaxations of resampling at the cost of substantial computational overhead \citep{corenflos2022DSMC,li2024jitterres,andersson2025diffusion-res}, even on highly parallel hardware.

Most DSMC methods target filtering distributions. To the best of our knowledge, the only closely related differentiable particle smoother is Mixture density particle smoother (MDPS) \cite{younis2024diff-smoother}. Like the classical two-filter marginal smoother \cite{briers2010two-filter}, it assimilates forward and backward particle filtering posteriors using a learned fusion network. While it can be effective in practice, the resulting marginals are not guaranteed to correspond to the smoothing distribution derived by the underlying DSSMs, and the method inherits biased gradient estimates from its constituent filters.

From a parallel computation standpoint, the primary bottleneck in SMC arises from resampling, which induces global dependencies between particles and requires thread synchronisation. Prior work has explored staggered resampling to alleviate this issue \cite{heine2014butterflyresample}. Prefix summation has been leveraged to compute smoothing marginal for affine and Gaussian models \citet{sarkka2021parallelkalman}. This method was applied within the framework of \citet{johnson2016gaphicalmodels} in \citet{zhao2023revisitstructuredvae} to learn Gaussian DSSMs. Trajectory-level smoothing for general SSMs is achieved in logarithmic span complexity by \citet{corenflos2022DSMC}. However, it requires repeated resampling and is therefore not suitable for end-to-end differentiation.

\section{Experiments}
\label{sec:exp}
We evaluate PVMC on (i) a linear-Gaussian system where exact smoothing is available; (ii) a nonlinear stochastic state estimation benchmark with supervised training and known latent states; and (iii) a real-world generative modelling task on financial time series. Full experimental details, architectures, and hyperparameters, are provided in Appendix~\ref{sec:app:exp}.

PVMC sits between differentiable SMC (DSMC) methods, and VAE-style DSSM training; no single family applies cleanly to all of our evaluation regimes, so we tailor baselines per experiment. Many classical particle smoothers are not end-to-end differentiable and are therefore unsuitable for training DSSMs; we use them only in the linear-Gaussian setting to verify that PVMC’s forward pass behaves as a Bayesian smoother. For supervised non-linear state-estimation, we compare to DSMC baselines that support gradient-based training, namely the Soft \citep{karkus2018DPF}, Stop-gradient \citep{scibior2021stopgrad}, and Diffusion \cite{andersson2025diffusion-res} differentiable particle filters (DPFs) and the differentiable mixture density particle smoother (MDPS) \citep{younis2024diff-smoother}. VAE-style baselines are excluded since they do not provide a suitable mechanism for per-time-step marginal state estimation.
For unsupervised generative modelling on SPX returns, we instead compare against widely used sequential generative baselines (DMM, TC-VAE) alongside Soft DPF, and omit methods that were unstable or exceeded our compute budget in this regime. Further baseline details are in Appendix ~\ref{sec:app:ex:baselines}.

\subsection{Linear Gaussian --- state estimation}
\begin{table}[h]
    \centering
    \small
    \begin{tabular}{|c|c|c|c|c|}
        \hline
         Method & $e_x$ & Time (s) & KSD  \\
         \hline
         RTS Smoother& $0.0$ & $0.14$ & ---\\
         Kalman Filter & $0.132$  & $0.13$ & --- \\
         TFS & $0.501$ & $25.9$ & $0.410$ \\
         d-SMC\tablefootnote{d-SMC targets the complete smoothing posterior, so a poor fit to the exact marginal posterior is expected.} & $0.44$ & $4.00$ & $2.21$ \\
         PVMC (Kalman proposal) & $0.054$ & $1.88$ & $0.200$ \\
         PVMC (learned proposal) & $0.052$ & $1.50$ & $0.199$ \\
        \hline
    \end{tabular}
    \caption{Comparison of our model and a number of baseline approaches to the RTS smoother.}
    \label{tab:LG-results}
\end{table}

Firstly, we perform a comparison to the exact Rauch-Tung-Striebel (RTS) smoother \cite{Ruach1965RTS}. 
\label{sec:exp:lg}
\begin{subequations}
\label{eq:ex:lg}
\begin{gather}
    x_t = A x_{t-1} + q_t, \label{eq:lg:x}\\
    y_t = H x_t + r_t, \label{eq:lg:y}\\
    A \in \mathbb{R}^{d_x\times d_x},\, A_{i,j} = 0.38^{\abs{i-j}+1}, \\
    H  \in \mathbb{R}^{d_y\times d_x},\, H_{i,j} = \mathds{1}\bra{i=j},\\
    q_{t} \sim \mathcal{N}\bra{0_{d_x}, I_{d_x\times d_x}},\\
    r_t \sim \mathcal{N}\bra{0_{d_y}, I_{d_y\times d_y}}, \\
    d_x=d_y=5.
\end{gather}
\end{subequations}
\begin{table*}[ht]
    \centering
    \begin{tabular}{lccccccr}
        \toprule
         Method & $\text{MSE}$ & $\text{Filtering MSE}$& 2-SWD  & Time (m:s) & \text{Failures} \\
         \midrule
         Stop-gradient& $0.83 \pm 0.50$ & $0.72 \pm 0.46$& $14.8 \pm 9.4$ & $\text{16:27} \pm \text{0:35}$  & 2 \\
         Soft& $0.62 \pm 0.42$ & $0.58 \pm 0.42$& $6.70 \pm 4.30$ & $\text{15:32} \pm \text{1:07}$  & 7 \\
         Diffusion& $0.52 \pm 0.22$ & $0.56 \pm 0.16$& $10.2 \pm 4.28$& $\text{267:10} \pm \text{5:20}$ & \textbf{0} \\
         \hline
         \hline
         MDPS& $1.20 \pm 0.55$ & $1.32 \pm 0.64$ & $13.5 \pm 10.0$ & $\text{26:23} \pm \text{0:14}$  & 14 \\
         P-VAE& $0.43 \pm 0.06$ & $1.21 \pm 0.11$ & $20.9 \pm 2.6$ &$\textbf{1:49} \pm \textbf{0:01}$  & \textbf{0} \\
         PVMC& $\mathbf{0.32 \pm 0.04}$ & $\mathbf{0.40 \pm 0.03}$ & $\mathbf{2.96 \pm 0.74}$ &$\textbf{1:49} \pm \textbf{0:01}$  & \textbf{0} \\
        \bottomrule
    \end{tabular}
    \caption{The performance of our model and baseline approaches on the prey-predator state estimation task. Methods above the divider train via filtering, while methods below the divider train via smoothing. We report five metrics: the state estimation MSE of the learned algorithm; the state estimation MSE of a bootstrap particle filter \cite{Gordon1993bootstrap} using the learned SSM; the time to complete an entire training run; and the number of failed training runs out of the 20 repeats. All reported statistics are the sample means and standard deviations across the successful training runs.}
    \label{tab:LV-results}
\end{table*}

This setting provides access to exact marginal posterior means via the RTS Smoother, allowing direct quantitative comparisons. We compare (i) the Kalman filter, (ii) the classical two-filter smoother (TFS) \citep{briers2010two-filter}, (iii) the parallel-in-time smoother d-SMC \citep{corenflos2022DSMC}, and (iv) PVMC in two variants: using the analytic Kalman filtering posterior as the proposal and using a learned neural proposal trained by maximising $\mathcal{L}^{N}_{\text{PVMC}}$. The latter evaluates the ability of PVMC to learn efficient proposal distributions, while all evaluated baseline approaches require the structural properties of the linear Gaussian model to derive an effective proposal distribution. 
Further experiment details and justification of our selection of baseline approaches can be found in Appendix \ref{sec:app:ex:lg}.

We report wall-clock time and the average L2 error between the estimated and exact RTS smoother posterior means. To demonstrate the methods' posterior fidelity we report a kernelised Stein discrepancy (KSD) \cite{liu2016KSD} between the approximate posterior and the exact posterior, see \cref{sec:app:ex:lg} for details. As shown in Table~\ref{tab:LG-results}, PVMC achieves a significantly lower mean error than baselines compared with the RTS Smoother-generated exact marginal posterior means, while improving in runtime compared to TFS, the sequential particle-smoothing baseline. The learned proposal achieves the best result on all metrics evidencing that PVMC's ELBO gradients are an effective learning signal.


\subsection{Prey-predator model --- state estimation}
We consider a stochastic variant of the Lotka-Volterra model \cite{wangersky1978lotka} for the population density of a given species of prey and predator, denoted by $u$ and $v$, respectively, in the form used by \cite{andersson2025diffusion-res}:
\begin{subequations}
\begin{gather}
    \frac{du}{d\tau} = u\bra{\tau}\bra{\alpha - \gamma v\bra{\tau} + \sigma dW_{1}},\\
    \frac{dv}{d\tau} = v\bra{\tau}\bra{\delta u\bra{\tau} -\beta + \sigma dW_{2}}\,,
\end{gather}
\end{subequations}
for $\alpha = \beta = 6, \, \gamma = 2, \, \delta= 4, \, \sigma = 0.15$ and $W_{1}, W_{2}$ are independent Brownian motions.
The populations are observed at discrete intervals indexed by $t$ following a Poisson distribution that depends on the densities of the prey and predator species:
\begin{subequations}
\label{eq:lv:obs-model}
    \begin{gather}
        y_{t} \sim \text{Poisson}\bra{\lambda\bra{u_{t}, v_{t}}}, \\
        \lambda\bra{u_{t},v_t} = \frac{5}{1 + \exp\bra{\begin{bmatrix} -5u_{t} \\-u_{t}v_{t}
        \end{bmatrix} + 4}}.
    \end{gather}
\end{subequations}
We simulate data from this model in $\tau\in\sbra{0, 3}$ by Milstein's method and generate observations at $T=256$ discretisation steps uniformly spaced in this range. 

In this experiment we assume we know the characteristics of the observation method. However, the model dynamics are parameterised by a learned neural network. This experiment targets the supervised regime: we assume access to a set of independent reference trajectories endowed with ground-truth latent states. We train PVMC by minimising a linear combination of the negative $\mathcal{L}^{N}_{\text{PVMC}}$ and the mean squared error between the estimated smoothing mean and the ground truth population densities.


We compare against three differentiable particle filter baselines, including the marginal variant of the stop-gradient filter (Stop-grad, \citet{scibior2021stopgrad}), the soft resampling filter (Soft, \citet{karkus2018DPF}), and the diffusion resampling filter (Diffusion, \citet{andersson2025diffusion-res}), as well as the mixture density particle smoother (MDPS, \citet{younis2024diff-smoother}), the only other differentiable smoother available in the literature. We also include an ablative (P-VAE) that uses PVMC's importance sampler for state estimation but trains with $\mathcal{L}^{N}_{\text{P-VAE}}$, defined in \cref{eq:PVAE_ELBO}, instead of $\mathcal{L}^{N}_{\text{PVMC}}$.  Full details and supplementary results can be found in Appendix \ref{sec:app:ex:lv}.

We evaluate each approach on its ability to learn to accurately predict the latent state. We also test the ability to train a DSSM that generalises to tasks not directly optimised for during training, namely the accuracy when used in classical particle filter and the discrepancy between the DSSM-implied filtering distribution and the true filtering distribution.
 Experimental results (\cref{tab:LV-results} and \cref{fig:lv-boxplot}) show that PVMC achieves the best overall results on all tasks over 20 independent training runs, while exhibiting markedly improved training stability relative to DSMC baselines, which fail to converge reliably across seeds, matching the findings of \citet{andersson2025diffusion-res}.

\subsection{Financial time series --- generative modelling}
\label{sec:finance}

We evaluate PVMC as a generative model on the Standard and Poor's 500 Index (SPX) daily close price.
This setting reflects practical constraints where latent states are unobserved and datasets are limited or sensitive, motivating synthetic data generation to augment financial datasets for use in both optimising and back-testing business and investment strategies \cite{meldrum2025financegen}.

In this setting we do not assume the availability or existence of a ground truth latent state. Instead, we interpret the latent state as a low-dimensional Markov variable summarising the state of the economy each day, mirroring  the structure of classical financial models, e.g \citet{hunt2000mfunc}.

We assess whether generated trajectories reproduce key properties of the SPX returns: (i) volatility clustering via the autocorrelation of absolute and squared daily returns, (ii) distributional shape via skewness and kurtosis. We train on rolling windows of 120-day trajectory segments from the 10-year period between 2014-08-01 and 2024-08-01, by maximising $\mathcal{L}^{N}_{\text{PVMC}}$.

We compare against: time causal (TC) VAE \cite{acciaio2024causalvae}; deep Markov model (DMM) \cite{krishnan2017DMM}, the soft resampling DPF \cite{karkus2018DPF}, along with the P-VAE ablation (which has the same architecture as PVMC but trained with the VAE ELBO objective as in DMM). Both the time causal and deep Markov model use the architectures proposed in the corresponding papers. TCVAE does not train a DSSM, so the PVMC architecture is not compatible with the TCVAE training methodology and vice-versa.

\cref{fig:autocorrelations,fig:skew_kurt} show that PVMC most consistently captures the short-term autocorrelations structure of SPX returns while also producing skewness and kurtosis distributions that are closer to those observed across real SPX slices. Neither DMM nor Soft-DPF were able to capture the correlation structure, suggesting they failed to learn a meaningful temporal dynamics. The empirical skewness and kurtosis are highly variable across different slices of the true SPX, making these distributional targets challenging to model. Nevertheless, PVMC provides the closest overall match, where TCVAE and P-VAE tend to under-approximate both the magnitude and spread of skewness and kurtosis. Soft-DPF and DMM again fail to capture these distributional properties. Further details and supplementary results are provided in Appendix \ref{sec:app:ex:spx}.


\begin{figure}
  \centering
  \begin{subfigure}{0.9\linewidth}
      \centering
    \includegraphics[width=\linewidth]{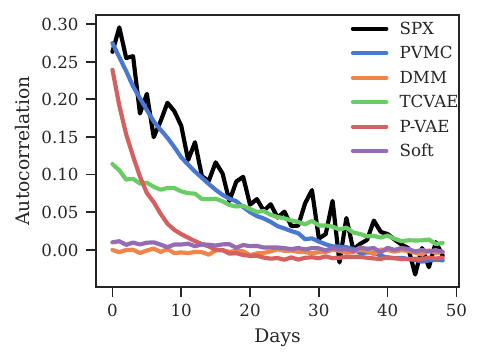}
  \end{subfigure}
  \begin{subfigure}{0.9\linewidth}
    \centering
    \includegraphics[width=\linewidth]{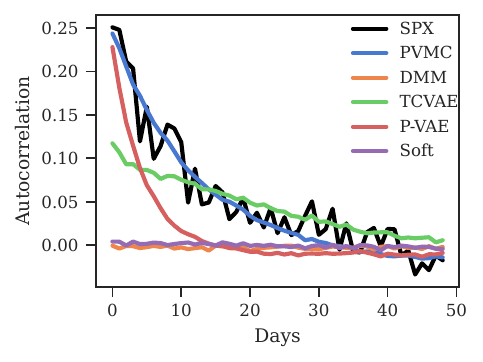}
  \end{subfigure}
  \caption{Comparison of the mean autocorrelation of absolute (above) and squared (below) daily returns over 1000 generated trajectories of 360 days to 6 non-overlapping real SPX trajectories of the same length.}
  \label{fig:autocorrelations}
\end{figure}

\begin{figure}
\centering
  \begin{subfigure}{0.9\linewidth}
      \centering
    \includegraphics[width=\linewidth]{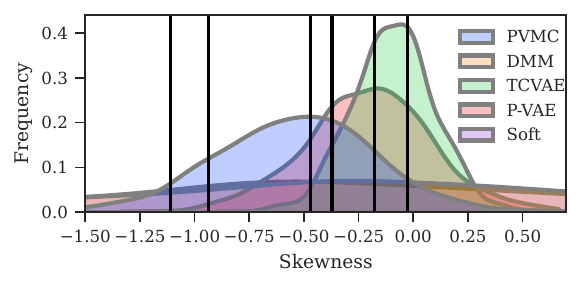}
  \end{subfigure}
  \begin{subfigure}{0.9\linewidth}
    \centering
    \includegraphics[width=\linewidth]{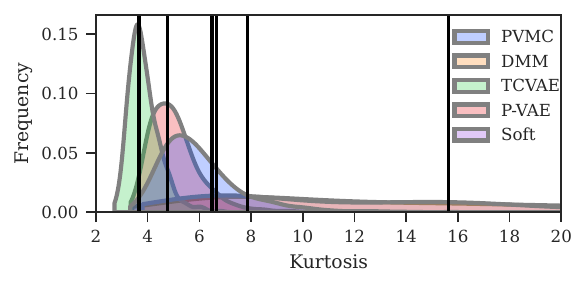}
  \end{subfigure}
  \caption{Comparison of the distribution of the skewness and kurtosis between the 1000 generated 360 day trajectories from each model. The black bars indicate the skewness and kurtosis for 6 non-overlapping 360-day trajectories from the real SPX.}
  \label{fig:skew_kurt}
\end{figure}

\section{Conclusion}
\label{sec:conclusion}
In this paper, we proposed a new training paradigm, named \emph{parallel variational Monte Carlo} (PVMC), for learning deep state space models. PVMC combines parallel scalability with principled posterior inference. It avoids sequential resampling entirely by targeting the marginal smoothing posterior via importance weighting. This yields a statistically consistent smoother with unbiased gradient estimates, while enabling parallel-in-time computation through associative scan operations. We show that PVMC robustly trains deep state space models for both generative and discriminative tasks, achieving state-of-the-art accuracy in the simulated and real-world state-space modelling tasks and resulting in a $\sim$$10\times$ speed-up compared to the fastest baseline approach for the evaluated supervised state estimation task.
\newpage
\section*{Acknowledgements}
The authors thank the anonymous reviewers for their valuable input and feedback, which we have incorporated into this paper. JJ Brady acknowledges the support of an NPL/EPSRC DTP studentship.

\section*{Impact statement}
This paper presents work whose goal is to advance the field of machine learning. There are a wide variety of potential societal consequences of our work, none of which we feel must be specifically highlighted here.

\bibliography{ref}
\bibliographystyle{icml2026}
\onecolumn
\appendix
\crefalias{section}{appendix}
\crefalias{subsection}{appendix}

\section{Proof of \cref{theory:weight_recursion} }
\label{sec:app:weight_recursion}

\begin{lemma}
    $(\mathbb{S}, \oplus)$, where $S$ is the set of tuples of $N\times N$ real matrices and $\oplus$ is defined in \cref{eq:weights-assoc-op}, is a semi-group.
\end{lemma}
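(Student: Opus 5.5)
The lemma has two parts: closure and associativity. I will verify each directly from the definition \cref{eq:weights-assoc-op}.

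\emph{Closure.} Take $(C_1,C_2),(D_1,D_2)\in\mathbb{S}$. The first component of $(C_1,C_2)\oplus(D_1,D_2)$ is $C_1$, which is an $N\times N$ real matrix. The second component is the triple product $C_2D_1D_2$ of $N\times N$ real matrices, so it is again an $N\times N$ real matrix. Hence $(C_1,C_2)\oplus(D_1,D_2)\in\mathbb{S}$.

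\emph{Associativity.} Fix $A=(A_1,A_2)$, $B=(B_1,B_2)$, $C=(C_1,C_2)$ in $\mathbb{S}$ and expand both bracketings.

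For the left bracketing, first $A\oplus B=(A_1,\,A_2B_1B_2)$. Applying $\oplus$ with $C$ gives
\begin{equation*}
(A\oplus B)\oplus C=\bra{A_1,\,(A_2B_1B_2)C_1C_2}.
\end{equation*}

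For the right bracketing, first $B\oplus C=(B_1,\,B_2C_1C_2)$. Applying $\oplus$ with $A$ on the left gives
\begin{equation*}
A\oplus(B\oplus C)=\bra{A_1,\,A_2B_1(B_2C_1C_2)}.
\end{equation*}

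The first components agree, since both equal $A_1$. The second components agree by associativity of matrix multiplication, since both equal $A_2B_1B_2C_1C_2$. Therefore $(A\oplus B)\oplus C=A\oplus(B\oplus C)$, and $(\mathbb{S},\oplus)$ is a semigroup.

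A useful consequence is that, by induction, $a_0\oplus\cdots\oplus a_s=\bra{(a_0)_1,\,(a_0)_2\,(a_1)_1(a_1)_2\cdots(a_s)_1(a_s)_2}$. This alternating chain of kernel matrices is what the main theorem's proof will use. There is no real obstacle in this lemma. The only point needing care is to treat $\oplus$ as non-commutative and keep each factor in its correct left-to-right position. No identity element is required, since a semigroup needs only associativity.
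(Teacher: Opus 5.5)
Your proof is correct and follows the paper's argument: closure is immediate, and associativity comes from expanding both bracketings to $(A_1, A_2B_1B_2C_1C_2)$ and using associativity of matrix multiplication. Your unrolled prefix formula $a_0\oplus\cdots\oplus a_s=\bra{(a_0)_1,\,(a_0)_2(a_1)_1(a_1)_2\cdots(a_s)_1(a_s)_2}$ is also correct, and it is what the main theorem's proof relies on.
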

\begin{proof}
We first note that $c_1 \oplus c_2 \in \mathbb{S}$ for all $c_1, c_2 \in \mathbb{S}$, hence $\bra{\mathbb{S}, \oplus}$ is closed. Furthermore,
for $c_{1},c_{2},c_{3} \in \mathbb{S}$, we have
  \begin{gather}
    \notag
        \bra{c_1\oplus c_2} \oplus c_3 =\bra{(c_1)_1, (c_1)_2(c_2)_1(c_2)_2} \oplus c_3 = \bra{(c_1)_1, (c_1)_2(c_2)_1(c_2)_2(c_3)_1(c_3)_2},\\
        \notag
        c_1 \oplus \bra{c_2\oplus c_3} = c_1\oplus \bra{(c_2)_1, (c_2)_2(c_3)_1(c_3)_2} = \bra{(c_1)_1, (c_1)_2(c_2)_1(c_2)_2(c_3)_1(c_3)_2},
    \end{gather}
    by the associativity of matrix multiplication.
    Consequently, $\oplus$ is associative, and thus $\bra{\mathbb{S}, \oplus}$ satisfies the semi-group axioms. 
\end{proof}

We note that each of the terms $K_t \bra{\cdot, \cdot}$ is a function of two time-adjacent latent states. This independence structure permits the exchange of summation and multiplication over time-scales longer than 2. 
\begin{lemma}
\label{lemma2_agg}
Let $A_i,\dots,A_j \in \mathbb{R}^{N\times N}$ be square matrices. Then
\begin{equation}
        \label{eq:lemma2_agg}
        \bra{\sum^{N}_{n_i,\dots,n_{j-1}=1}\prod^{j}_{t=i}A^{n_{t}, n_{t-1}}_{t}}^{n_{i-1}, n_{j}} = \bra{\bigotimes^{j}_{t=i} A_{t}}^{n_{i-1}, n_{j}}\mkern-40mu, \; \qquad i<j, 
    \end{equation}
    where $\bigotimes$ is the order-preserving product of a sequence of matrices, and the upper indices refer to matrix entries.
\end{lemma}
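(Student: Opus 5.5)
We argue by induction on the length $j-i+1$ of the product, with the indexing convention that $A_t^{n_t,n_{t-1}}$ sits in the product while the matrix entry on the right is read so that the outer indices are $n_{i-1}$ and $n_j$. First, fix the orientation: the left-hand side is a function of the free indices $(n_{i-1},n_j)$, and each factor links $n_{t-1}$ to $n_t$. It is cleanest to treat each factor as the $(n_{t-1},n_t)$ entry of a matrix $\tilde A_t$ with $\tilde A_t^{a,b}=A_t^{b,a}$. The left-hand side is then $\sum_{n_i,\dots,n_{j-1}}\prod_{t=i}^{j}\tilde A_t^{n_{t-1},n_t}$, and the claim is that this equals the $(n_{i-1},n_j)$ entry of the ordered product $\tilde A_i\tilde A_{i+1}\cdots\tilde A_j$, which is the order-preserving product $\bigotimes_{t=i}^{j}$ in the index convention used in \cref{eq:semi_group_members_weights}, where the first index is the earlier time. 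I would state this identification explicitly at the outset, since the only real risk is a transposition or ordering slip.

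For the base case $j=i+1$, there is a single summed index $n_i$, and $\sum_{n_i}\tilde A_i^{n_{i-1},n_i}\tilde A_{i+1}^{n_i,n_{i+1}}$ is precisely the definition of the matrix product entry $(\tilde A_i\tilde A_{i+1})^{n_{i-1},n_{i+1}}$.

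For the inductive step, assume the identity for the product over $t=i,\dots,j-1$. Because every factor depends only on two time-adjacent indices, the product splits as $\bigl(\prod_{t=i}^{j-1}\tilde A_t^{n_{t-1},n_t}\bigr)\tilde A_j^{n_{j-1},n_j}$, and only the first factor involves $n_i,\dots,n_{j-2}$. We may therefore carry out those sums first by Fubini on finite sums. By the inductive hypothesis this yields $\bigl(\bigotimes_{t=i}^{j-1}\tilde A_t\bigr)^{n_{i-1},n_{j-1}}$. The remaining sum over $n_{j-1}$ against $\tilde A_j^{n_{j-1},n_j}$ is again one matrix multiplication, which gives $\bigl(\bigotimes_{t=i}^{j}\tilde A_t\bigr)^{n_{i-1},n_j}$. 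Associativity of matrix multiplication guarantees that the order-preserving product is well defined regardless of bracketing.

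There is no substantive obstacle: the content is the distributivity/exchange of finite sums with products, enabled by the chain (time-adjacent) dependence structure. The only care needed is keeping the index convention consistent with how $a_s$ stores $K_t(X^{j}_t,X^{i}_{t-1})$ at entry $(i,j)$, so that the product order corresponds to increasing time. That consistency is what the theorem's proof will use when it applies this lemma to contract the blocks of \cref{eq:weights} into the entries of $b_s$ and $\hat b_{s+1}$.
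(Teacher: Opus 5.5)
Your proof is correct and follows essentially the same route as the paper. Both argue by induction on the number of factors, split off the last factor $A_j$, collapse the inner sums over $n_i,\dots,n_{j-2}$ with the inductive hypothesis, and finish with one matrix multiplication over $n_{j-1}$. Your explicit transposition $\tilde A_t^{a,b}=A_t^{b,a}$ is a useful clarification: the paper's inductive step silently reads $A_j^{n_j,n_{j-1}}$ as the $(n_{j-1},n_j)$ entry, which matches how $a_s$ stores $K_t$.
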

\begin{proof}
For convenience, we define $\bigotimes^{i}_{t=i}A_{t}\equiv A_i$.
    We proceed by induction: The base case is immediate, reducing to a single matrix product. Moreover,
    \begin{align}
     \notag   \bra{\sum^{N}_{n_i,\dots,n_{j-1}=1}\prod^{j}_{t=i}A^{n_{t}, n_{t-1}}_{t}}^{n_{i-1},n_j} &=\sum^{N}_{n_{j-1}=1} \sbra{\bra{\sum^{N}_{n_i,\dots,n_{j-2}=1}\prod^{j-1}_{t=i}A^{n_t,n_{t-1}}_{t}}^{n_{i-1}, n_{j-1}}A_{j}^{n_{j},n_{j-1}}}\\
        &= \bra{\bra{\bigotimes^{j-1}_{t=i} A_{t}}A_j}^{n_{i-1}, n_{j}} \label{eq:bigotimes_induction} = \bra{\bigotimes^{j}_{t=i} A_{t}}^{n_{i-1}, n_{j}}\mkern-40mu,
    \end{align}
    where we have applied the inductive step in \eqref{eq:bigotimes_induction}.
    Therefore, Equation \eqref{eq:lemma2_agg} holds for all $j>i$.
\end{proof}
We next apply \cref{lemma2_agg} to $a_{0:\frac{T-1}{2}}$, defined in \cref{eq:semi_group_members_weights} for $0<s<\frac{T-3}{2}$  and obtain
            \begin{gather}
            \label{eqs:weights-components1}
            \bra{\bra{b_{s}}_{1}}^{ij} = K_{0}\bra{X^{j}_{0}, \cdot},\\
            \label{eqs:weights-components2}
            \bra{\bra{b_{s}}_{2}}^{jk} = \sum^{N}_{n_0,\dots n_{2s}=1}\prod^{2s+1}_{v=1}K_{v}\bra{X^{n_v}_{v}, X^{n_{v-1}}_{v-1}}\mathds{1}\bra{n_{0} = j}\mathds{1}\bra{n_{2s+1} = k},\\
            \label{eqs:weights-components3}
            \bra{\bra{\hat{b}_{s+1}}_{1}}^{kl} = K_{2s+2}\bra{X^{l}_{2s+2}, X^{k}_{2s+1}},
            \\
            \label{eqs:weights-components4}
            \bra{\bra{\hat{b}_{s+1}}_{2}}^{lm} = \sum^{N}_{n_{2s+2},\dots, n_{T}=1}\prod^{T}_{v=2s+3}K_{v}\bra{X^{n_v}_{v}, X^{n_{v-1}}_{v-1}}\mathds{1}\bra{n_{2s+2} = l}\mathds{1}\bra{n_{T} = m},
        \end{gather}
where we recall the notational convention established after \cref{eq:K>0}.

Substituting \cref{eqs:weights-components1,eqs:weights-components2,eqs:weights-components3,eqs:weights-components4} into \cref{eq:w_t^i} yields \cref{eq:weights}. We demonstrate this explicitly case-by-case.
For $t=0$, we have
\begin{equation}
\begin{aligned}
    \sum^{N}_{k,l,m = 1}\bra{\bra{b_{0}}_{1}}^{1i}\bra{\bra{b_{0}}_{2}}^{ik}\bra{\bra{\hat{b}_{1}}_{1}}^{kl} \bra{\bra{\hat{b}_{1}}_{2}}^{lm} &= \sum^{N}_{k,l,m = 1}\bigg(K_{0}\bra{X^{i}_{0},\cdot}\times K_{1}\bra{X^{k}_{1}, X^{i}_{0}}\times K_{2}\bra{X^{l}_{2}, X^{k}_{1}}\\
    \times \sum^{N}_{n_{3},\dots,n_{T-1}=1}&K_{3}\bra{X^{n_3}_{3}, X^{l}_{2}}K_{T}\bra{X^{m}_{T}, X^{n_{t-1}}_{t-1}}\prod^{T-1}_{v=3}K_{v}\bra{X^{n_v}_{v}, X^{n_{v-1}}_{v-1}}\bigg)
\\ &= K_{0}\bra{X^{i}_{0},\cdot}\sum^{N}_{n_1,\dots,n_T=1}\prod^{T}_{v=1}K_{v}\bra{X^{n_v}_{v}, X^{n_{v-1}}_{v-1}}= w^i_0\,.
\end{aligned}
\end{equation}
For $t<T$ and odd, we have
\begin{equation}
\begin{aligned}
    \sum^{N}_{k,l,m = 1}\bra{\bra{b_{\frac{t-1}{2}}}}_{1}^{1i}\bra{\bra{b_{\frac{t-1}{2}}}_{2}}^{ik}\bra{\bra{\hat{b}_{\frac{t+1}{2}}}_{1}}^{kl} \bra{\bra{\hat{b}_{\frac{t+1}{2}}}_{2}}^{lm}& = \sum^{N}_{k,l,m = 1}\bigg(K_{0}\bra{X^{k}_{0},\cdot} \times K_{t+1}\bra{X^{l}_{t+1}, X^{i}_{t}}
    \\
    \times\sum^{N}_{n_{1},\dots,n_{t}=1} &K_{1}\bra{X^{n_{1}}_{1}, X^{k}_{0}}\prod^{t}_{v=2}K_{v}\bra{X^{n_v}_{v}, X^{n_{v-1}}_{v-1}}\mathds{1}\bra{n_{t} = i}\\
    \times \sum^{N}_{n_{t+1},\dots, n_{T}=1}\prod^{T}_{v=t+2}&K_{v}\bra{X^{n_v}_{v}, X^{n_{v-1}}_{v-1}}\mathds{1}\bra{n_{t+1} = l}\mathds{1}\bra{n_{T} = m}\bigg)
\\ = \sum^{N}_{n_0,\dots,n_T=1}&\prod^{T}_{v=1}K_{v}\bra{X^{n_v}_{v}, X^{n_{v-1}}_{v-1}}\mathds{1}\bra{n_{t} = i} = w^{i}_{t}\,.
\end{aligned}
\end{equation}
For $t>0$ and even, we have
\begin{equation}
\begin{aligned}
    \sum^{N}_{k,l,m = 1}\bra{\bra{b_{\frac{t}{2}-1}}}_{1}^{1i}\bra{\bra{b_{\frac{t}{2}-1}}_{2}}^{ik}\bra{\bra{\hat{b}_{\frac{t}{2}}}_{1}}^{kl} \bra{\bra{\hat{b}_{\frac{t}{2}}}_{2}}^{lm} &= \sum^{N}_{k,l,m = 1}\bigg(K_{0}\bra{X^{k}_{0},\cdot} \times K_{t+1}\bra{X^{i}_{t}, X^{l}_{t-1}}
    \\
    \times\sum^{N}_{n_{1},\dots,n_{t}=1} &K_{1}\bra{X^{n_{1}}_{1}, X^{k}_{0}}\prod^{t-1}_{v=2}K_{v}\bra{X^{n_v}_{v}, X^{n_{v-1}}_{v-1}}\mathds{1}\bra{n_{t-1} = l}\\
    \times \sum^{N}_{n_{t},\dots, n_{T}=1}\prod^{T}_{v=t+1}&K_{v}\bra{X^{n_v}_{v}, X^{n_{v-1}}_{v-1}}\mathds{1}\bra{n_{t} = i}\mathds{1}\bra{n_{T} = m}\bigg)
\\ &= \sum^{N}_{n_0,\dots,n_T=1}\prod^{T}_{v=1}K_{v}\bra{X^{n_v}_{v}, X^{n_{v-1}}_{v-1}}\mathds{1}\bra{n_{t} = i} = w^{i}_{t}\,.
\end{aligned}
\end{equation}
For $T=t$, we have
\begin{equation}
\begin{aligned}
    \sum^{N}_{k,l,m = 1}\bra{\bra{b_{\frac{T-3}{2}}}_{1}}^{1i}\bra{\bra{b_{\frac{T-3}{2}}}_{2}}^{ik}\bra{\bra{\hat{b}_{\frac{T-1}{2}}}_{1}}^{kl}& \bra{\bra{\hat{b}_{\frac{T-1}{2}}}_{2}}^{lm} = \sum^{N}_{k,l,m = 1}\bigg(K_{0}\bra{X^{k}_{0},\cdot} \times \\ K_{T-1}\bra{X^{m}_{T-1}, X^{l}_{T-2}}
    \times\sum^{N}_{n_{1},\dots,n_{t}=1} K_{1}\bra{X^{n_{1}}_{1}, X^{k}_{0}}\prod^{T-2}_{v=2}&K_{v}\bra{X^{n_v}_{v}, X^{n_{v-1}}_{v-1}}\mathds{1}\bra{n_{T-2} = l} \times K_{T}\bra{X^{i}_{T}, X^{m}_{T-1}} \bigg)
\\ = \sum^{N}_{n_0,\dots,n_{t-1}=1}&K_{T}\bra{X^{i}_{T}, X^{n_{T-1}}_{T-1}}\prod^{T}_{v=1}K_{v}\bra{X^{n_v}_{v}, X^{n_{v-1}}_{v-1}}= w^{i}_{T}\,.
\end{aligned}
\end{equation}
\begin{remark}
\label{remark:even_case}
  When $T$ is even, we can reduce to the odd case by prepending a dummy element, specifically the $N\times N$ all-ones matrix, to the sequence of kernels $K_{0:T}$, so that the resulting horizon is odd. Then the weights are obtained by running the same construction and then discarding the initial weights $w^{1:N}_{0}$, and finally re-indexing the remaining weights $w^{1:N}_{1:T+1}$ by shifting the time index down by one, \ie $w^{1:N}_{s} \to w^{1:N}_{s-1}$.
\end{remark}

\newpage
\section{Proof of \cref{theory:ELBOs}}
\label{sec:proof:Elbo}
The proofs in this section are slightly modified variants of those given in \citet{burda2016IWAE}.

The first inequality
\begin{equation*}
     \log p\bra{y_{0:T}} \geq \mathcal{L}^{N}_{\text{PVMC}}\,
\end{equation*}
is obtained from a direct application of Jensen's inequality.

The following lemma will prove useful:
\begin{lemma}
\label{lemma:subset_decomp}
The total set of trajectories $\mathcal{T} = \cbra{1, \dots, N}^{T+1}$ can be divided into $N^{T}$ disjoint subsets of $N$ elements each, where, within each subset, all trajectories are independently and identically distributed.
\end{lemma}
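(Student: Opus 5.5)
The natural construction is a Latin-square style partition built from cyclic shifts. Identify each index in $\cbra{1,\dots,N}$ with an element of $\mathbb{Z}_N$. For every shift vector $c=(c_1,\dots,c_T)\in\mathbb{Z}_N^{T}$ define the subset
\begin{equation*}
\mathcal{T}_c=\cbra{(n, n+c_1, n+c_2,\dots,n+c_T) \bmod N : n\in\mathbb{Z}_N}.
\end{equation*}
So the index at time $0$ ranges over all $N$ values, and the index at time $t$ is offset from it by the fixed amount $c_t$. There are $N^T$ such shift vectors, and each $\mathcal{T}_c$ has exactly $N$ elements, because distinct values of $n$ give distinct time-$0$ indices.

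Next I would show the sets form a partition of $\mathcal{T}$. Given any trajectory $(n_0,\dots,n_T)$, the choice $c_t=n_t-n_0 \bmod N$ is forced, so each trajectory lies in exactly one $\mathcal{T}_c$. This gives disjointness and covering simultaneously; equivalently, $N\cdot N^{T}=N^{T+1}$ together with uniqueness of $c$ suffices.

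The main step is the distributional claim. Fix $c$ and look at two distinct trajectories $n\neq n'$ in $\mathcal{T}_c$. At every time $t$ their indices $n+c_t$ and $n'+c_t$ are distinct. The particles $X_t^{1:N}$ are i.i.d. within each time step and independent across time steps by \cref{eq:factor_prop}. Hence the $N$ trajectories $(X_0^{n}, X_1^{n+c_1},\dots,X_T^{n+c_T})$ use pairwise disjoint collections of particles and are therefore mutually independent.

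Each trajectory is distributed as $\prod_t V_t(\cdot\mid y_{0:T})$, so the $N$ trajectories are identically distributed. They remain i.i.d. after applying the fixed map $x_{0:T}\mapsto\prod_t K_t(x_t,x_{t-1})$.

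The only subtlety to flag is that independence is genuinely needed only within a subset, not across subsets. Trajectories in different $\mathcal{T}_c$ share particles, and the statement does not claim otherwise.

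This is what the later ELBO comparisons use. Writing $\hat{L}^N = N^{-T}\sum_c \frac1N\sum_{n\in\mathcal{T}_c}(\cdot)$ exhibits $\hat{L}^N$ as an average of $N^T$ IWAE-type estimators, each with the same law as the $N$-sample IWAE estimator. Concavity of $\log$ then yields $\mathcal{L}^N_{\text{PVMC}}\ge\mathcal{L}^N_{\text{IWAE}}$.
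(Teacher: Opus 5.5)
Your proposal is correct and is essentially the paper's construction. The paper starts from the diagonal set $\{(n,\dots,n)\}$ and cyclically shifts the indices at times $t\ge 1$ (shifting time $0$ is redundant up to relabelling), which gives exactly your sets $\mathcal{T}_c$, $c\in\mathbb{Z}_N^{T}$; your observation that $c_t = n_t - n_0 \bmod N$ is forced proves the partition property more cleanly than the paper's counting argument.
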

\begin{proof}
    We prove this lemma by constructing such a subset decomposition of trajectories.
    This can be conducted systematically as follows: take the set of trajectories where each particle has the same index, \ie $\cbra{n_{0},\dots,n_{T}} = n$. There are $N$ such trajectories. As no particles are repeated between any trajectory in this set, the trajectories are independent by the structure of our proposal distribution (\cref{eq:factor_prop}). We designate this as the first of our $N^{T}$ subsets. 
    
    Now consider cyclic permutations of the indices at a given $t$ amongst the subset; there are $T+1$ such positions to cycle. However, due to invariance under relabelling, cycling the first index leads to repeated trajectories. We therefore find $N^{T}$ subsets of $N$ independent terms. As we have considered permutations, it is guaranteed that no particle will be repeated within a subset. Consequently, the trajectories are independent. Furthermore, since our process guarantees that no trajectories are repeated and we have obtained the expected $N^{T+1}$ total, all distinct trajectories are accounted for.
 \end{proof}   

 \begin{remark}
    We demonstrate the process  from Lemma \ref{lemma:subset_decomp} by writing down the $9$ subsets of $3$ trajectories obtained from the minimal example of $N=3, T=2$ below, where the rows of each subset represent the indices $(n_0, n_1, n_2)$ for each trajectory:
    \begin{equation}
    \begin{aligned}
    &
        \begin{bmatrix}
            1,1,1\\
            2,2,2\\
            3,3,3
        \end{bmatrix}
        \begin{bmatrix}
            1,1,3\\
            2,2,1\\
            3,3,2
        \end{bmatrix}
        \begin{bmatrix}
            1,1,2\\
            2,2,3\\
            3,3,1
        \end{bmatrix}
        \\
        &
        \begin{bmatrix}
            1,3,1\\
            2,1,2\\
            3,2,3
        \end{bmatrix}
        \begin{bmatrix}
            1,3,3\\
            2,1,1\\
            3,2,2
        \end{bmatrix}
        \begin{bmatrix}
            1,3,2\\
            2,1,3\\
            3,2,1
        \end{bmatrix}
        \\
        &
        \begin{bmatrix}
            1,2,1\\
            2,3,2\\
            3,1,3
        \end{bmatrix}
        \begin{bmatrix}
            1,2,3\\
            2,3,1\\
            3,1,2
        \end{bmatrix}
        \begin{bmatrix}
            1,2,2\\
            2,3,3\\
            3,1,1
        \end{bmatrix}\, .
\end{aligned}
\end{equation}
\end{remark}
By forming the subsets from Lemma \ref{lemma:subset_decomp} systematically, we may assign an index $g$ to each of them, writing $\mathcal{T}_{g} \subset \mathcal{T}$. Then, let $r^g$ be the average over the unnormalised importance weights of all trajectories in subset $g$,
\begin{equation}
    \label{eq:def_r1}
    r^{g} := \frac{1}{N} \sum_{\bra{n_0, n_1, \dots, n_{T}} \in \mathcal{T}_{g}} \prod^{T}_{t=0}K_{t}\bra{X^{n_{t}}_{t}, X^{n_{t-1}}_{t-1}}.
\end{equation}

Let $g'$ be a random index chosen uniformly from $\cbra{1,\dots,N^{T}}$. Using the independence decomposition from \cref{lemma:subset_decomp}, we see that
\begin{align}
    \label{eq:elbo-proof1}
    \mathcal{L}^{N}_{\text{PVMC}} &= \expec\sbra{\log\frac{1}{N^{T}}\sum^{N^{T}}_{g = 1}r^{g}}
    = \expec\sbra{\log\expec_{g'}\sbra{r^{g'}}}\\
    \label{eq:elbo-proof3}
    &\geq \expec\sbra{\expec_{g'}\sbra{\log r^{g'}}}
    = \expec{\sbra{\log r^{1}}} \equiv \mathcal{L}^{N}_{\text{IWAE}},
\end{align}
where we have applied Jensen's inequality and the fact that the marginal distribution of $r^{g}$ does not depend on $g$.

For $N \geq \widetilde{N} \geq 1$ we recall the following result from \citet{burda2016IWAE}:
\begin{equation}\mathcal{L}^{N}_{\text{IWAE}}  \geq \mathcal{L}^{\widetilde{N}}_{\text{IWAE}} \geq \mathcal{L}^{1}_{\text{IWAE}}.
\end{equation}

We also have
\begin{align}
\label{eq:ELBO_IWAE_1_proof}
\mathcal{L}^{1}_{\mathrm{IWAE}}
&=
\expec\sbra{
\log
\left[
K_0\bra{X_0^1,\cdot}
\prod_{t=1}^{T}
K_t\bra{X_t^1,X_{t-1}^1}
\right]
}
=
\expec\sbra{\log K_0\bra{X_0^1,\cdot}}
+
\sum_{t=1}^{T}
\expec\sbra{
\log K_t\bra{X_t^1,X_{t-1}^1}
},
\\
\label{eq:ELBO_VAE_proof}
\mathcal{L}^{N}_{\mathrm{VAE}}
&=
\frac{1}{N}
\sum_{n=1}^{N}
\expec\sbra{
\log
\left[
K_0\bra{X_0^n,\cdot}
\prod_{t=1}^{T}
K_t\bra{X_t^n,X_{t-1}^n}
\right]
}
=
\mathcal{L}^{1}_{\mathrm{IWAE}},
\\
\label{eq:ELBO_P-VAE_proof}
\mathcal{L}^{N}_{\mathrm{P\text{-}VAE}}
&=
\frac{1}{N^{T+1}}
\sum_{n_0,\dots,n_T=1}^{N}
\expec\sbra{
\log
\left[
K_0\bra{X_0^{n_0},\cdot}
\prod_{t=1}^{T}
K_t\bra{X_t^{n_t},X_{t-1}^{n_{t-1}}}
\right]
}
=
\mathcal{L}^{1}_{\mathrm{IWAE}}.
\end{align}

Therefore, we obtain the hierarchy:
\begin{equation}
  \log   p\bra{y_{0:T}} \geq \mathcal{L}^{N}_{\text{PVMC}} \geq \mathcal{L}^{N}_{\text{IWAE}} \geq \mathcal{L}^{\widetilde{N}}_{\text{IWAE}} \geq \mathcal{L}^{N}_{\text{P-VAE}} = \mathcal{L}^{N}_{\text{VAE}}\,.
\end{equation}

To prove the monotonicity of $\mathcal{L}^{N}_{\text{PVMC}}$ in \(N\), the following two lemmas are helpful:
\begin{lemma}
\label{lemma:expec_sums}
For any indexed sequence of real numbers $a^{1:M} \in \mathbb{R}^{M}$, let $\mathcal{S}^{\widetilde{M}}$ be a collection of $\widetilde{M}$-element subsets of $\cbra{1,\dots,M}$ such that
    \begin{equation*}
        \sum_{\mathcal{I} \in \mathcal{S}^{\widetilde{M}}}\mathds{1}\bra{m \in \mathcal{I}} = \bar{M} \quad \forall m \in \cbra{1,\dots, M}.
    \end{equation*}
     Furthermore, let $\mathcal{I}^{\widetilde{M}}$ be a uniformly sampled element of $\mathcal{S}^{\widetilde{M}}$. Then
    \begin{equation}
    \label{eq:lem_expec_sums}
        \frac{1}{\widetilde{M}}\expec\sbra{\sum_{i\in\mathcal{I}^{\widetilde{M}}}a^{i}} = \frac{1}{M}\sum^{M}_{i=1}a^{i}.
    \end{equation}
\end{lemma}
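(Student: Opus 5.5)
The plan is a double-counting argument: write the expectation over the uniformly random subset as an average over $\mathcal{S}^{\widetilde{M}}$, swap the order of summation, use the uniform-coverage hypothesis to get a factor $\bar{M}$ in front of each $a^i$, and finally identify $\bar{M}$ by counting incidences.

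\emph{Step 1: write out the expectation.} Since $\mathcal{I}^{\widetilde{M}}$ is uniform on the finite collection $\mathcal{S}^{\widetilde{M}}$, write $|\mathcal{S}^{\widetilde{M}}|$ for its cardinality. Then
\begin{equation*}
\expec\sbra{\sum_{i\in\mathcal{I}^{\widetilde{M}}}a^{i}} = \frac{1}{|\mathcal{S}^{\widetilde{M}}|}\sum_{\mathcal{I}\in\mathcal{S}^{\widetilde{M}}}\sum^{M}_{m=1}\mathds{1}\bra{m\in\mathcal{I}}a^{m}.
\end{equation*}

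\emph{Step 2: exchange the two finite sums.} Applying the hypothesis to the inner count gives
\begin{equation*}
\sum^{M}_{m=1}a^{m}\sum_{\mathcal{I}}\mathds{1}\bra{m\in\mathcal{I}} = \bar{M}\sum^{M}_{m=1}a^{m}.
\end{equation*}

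\emph{Step 3: identify $\bar{M}$.} Count the incidence pairs $(m,\mathcal{I})$ with $m\in\mathcal{I}$ in two ways. Summing over $\mathcal{I}$ first gives $\widetilde{M}\,|\mathcal{S}^{\widetilde{M}}|$, because every set in the collection has exactly $\widetilde{M}$ elements. Summing over $m$ first gives $M\bar{M}$, by the hypothesis. Hence
\begin{equation*}
\bar{M}=\frac{\widetilde{M}\,|\mathcal{S}^{\widetilde{M}}|}{M}.
\end{equation*}

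\emph{Step 4: conclude.} Substituting this value of $\bar{M}$ into the expression from Steps 1 and 2 yields
\begin{equation*}
\expec\sbra{\sum_{i\in\mathcal{I}^{\widetilde{M}}}a^{i}} = \frac{\widetilde{M}}{M}\sum_{m=1}^{M} a^{m}.
\end{equation*}
Dividing both sides by $\widetilde{M}$ gives \cref{eq:lem_expec_sums}.

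There is no real obstacle here. The only point needing care is Step 3, where the double count uses the implicit assumptions that the collection $\mathcal{S}^{\widetilde{M}}$ is finite and nonempty and that each of its members has exactly $\widetilde{M}$ elements. Everything else is linearity of finite sums.
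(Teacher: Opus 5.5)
Your proof is correct and follows the same route as the paper: write the expectation as an average over $\mathcal{S}^{\widetilde{M}}$, swap the sums to get $\frac{\bar{M}}{|\mathcal{S}^{\widetilde{M}}|}\sum_{m} a^{m}$, and use $\widetilde{M}\,|\mathcal{S}^{\widetilde{M}}| = M\bar{M}$. The only difference is that you justify that last identity explicitly by double counting incidence pairs, where the paper calls it trivial.
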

\begin{proof}
By direct computation, we see that
\begin{equation}\label{eq:lem_expec_sums_1}\expec\sbra{\sum_{i\in\mathcal{I}^{\widetilde{M}}}a^{i}} = \frac{1}{\abs{\mathcal{S}^{\widetilde{M}}}}\sum_{\mathcal{I}^{\widetilde{M}} \in \mathcal{S}^{\widetilde{M}}}\sum_{i\in\mathcal{I}^{\widetilde{M}}}a^{i}
        = \frac{\bar{M}}{\abs{S^{\widetilde{M}}}}\sum^{M}_{i=1}a^{i}.
       \end{equation}
    We also trivially have
    \begin{equation*}
        \widetilde{M} \abs{\mathcal{S}^{\widetilde{M}}} = M\bar{M},
    \end{equation*}
    thus completing the proof.
\end{proof}

\begin{lemma}
\label{lemma:invariant_group}
Let $R$ be the direct product of $T+1$ copies of the symmetric group on
$\cbra{1,\dots,N}$. Thus, each element
$\widehat R=(\widehat R_0,\dots,\widehat R_T)\in R$ consists of one
permutation of the particle indices at each time-step. Then, under the factorised
proposal in \cref{eq:factor_prop}, the joint law of the particles
$X^{1}_{0}$ is invariant under the action of any
$\widehat R\in R$.
\end{lemma}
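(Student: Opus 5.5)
The statement is presumably meant to say that the joint law of the whole particle array $\{X^{n}_{t}\}_{n\le N,\,0\le t\le T}$ is invariant, not just $X^{1}_{0}$. I will prove that version; the literal claim about $X^{1}_{0}$ follows by taking the corresponding marginal. The argument uses only the factorised proposal \cref{eq:factor_prop} and the i.i.d.\ sampling within each time-step.

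\textbf{Step 1 (joint density).} Under \cref{eq:factor_prop}, with $X^{1:N}_{t}$ drawn i.i.d.\ from $V_t(\cdot\mid y_{0:T})$ independently across $t$, the array has joint density
\begin{equation*}
p\bra{x^{1:N}_{0:T}} = \prod_{t=0}^{T}\prod_{n=1}^{N} V_t\bra{x^{n}_{t}\mid y_{0:T}}.
\end{equation*}

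\textbf{Step 2 (action of $R$).} An element $\widehat R=(\widehat R_0,\dots,\widehat R_T)\in R$ maps the array to $\widetilde X^{n}_{t}:=X^{\widehat R_t(n)}_{t}$. This map is a coordinate permutation of $\mathbb{R}^{N(T+1)d_x}$, so it is a bijection with unit Jacobian. The density of the permuted array is therefore
\begin{equation*}
\prod_{t=0}^{T}\prod_{n=1}^{N} V_t\bra{x^{\widehat R_t^{-1}(n)}_{t}\mid y_{0:T}}.
\end{equation*}

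\textbf{Step 3 (reindexing).} For each fixed $t$, reindexing the inner product by $m=\widehat R_t^{-1}(n)$ returns $\prod_{m=1}^{N} V_t\bra{x^{m}_{t}\mid y_{0:T}}$. Hence the permuted array has the same density as in Step 1, and the two laws coincide.

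If densities are to be avoided, the same argument can be run through exchangeability. Within each time block the variables are i.i.d., so for any measurable rectangle the probability factorises into identical terms, and a permutation only reorders those terms. Since rectangles generate the product $\sigma$-algebra, the laws agree.

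\textbf{Consequence for later use.} Any functional of the form $\prod_t K_t\bra{X^{n_t}_t,X^{n_{t-1}}_{t-1}}$ indexed by a trajectory $(n_0,\dots,n_T)$ has the same law as the functional indexed by $(\widehat R_0(n_0),\dots,\widehat R_T(n_T))$. The same holds jointly for collections of trajectories. This is the fact the subsequent monotonicity argument relies on.

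The only genuine obstacle is the mismatch between the literal statement ($X^1_0$ alone) and its intended content. Once it is read as invariance of the full array, each step is routine.
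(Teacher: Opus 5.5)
Your proof is correct and follows essentially the same route as the paper: write the joint proposal density as $\prod_{t}\prod_{n} V_t(x^{n}_{t}\mid y_{0:T})$ and observe that permuting indices within each time-step only reorders the inner product. Your reading of the statement as invariance of the full array $X^{1:N}_{0:T}$ matches what the paper's proof actually establishes. Your note that the relabelling is a coordinate permutation with unit Jacobian makes explicit a step the paper leaves implicit.
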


\begin{proof}
The joint density is given by
    \begin{equation*}
V\bra{X^{1:N}_{0:T}|y_{0:T}} = \prod^{N}_{i=1}\prod^{T}_{t=0}V_{t}\bra{X_t^{i}|y_{0:T}}.
    \end{equation*}
    Let $\widehat{R}_{t}\bra{i}$ be the new index of the particle $X^{i}_{t}$ after relabelling, and $\widehat{R}\bra{X^{1:T}_{0:T}}$ be the array of particles after relabelling. We then have
    \begin{equation*}V\bra{\widehat{R}\bra{X^{1:N}_{0:T}}|y_{0:T}} =  \prod^{N}_{i=1}\prod^{T}_{t=0}V_{t}\bra{X^{\widehat{R}_{t}(i)}_{t} | y_{0:T}} = \prod^{N}_{i=1}\prod^{T}_{t=0}V_{t}\bra{X_t^{i}|y_{0:T}};
    \end{equation*}
hence the joint distribution is invariant under relabelling of the particle indices.
\end{proof}

With those lemmas in place, we can establish monotonicity as follows: let $\widehat{R} \in R$ be a uniformly selected  element from the relabelling group $R$. Furthermore, let $\mathcal{T}^{{\widetilde{N}}}$ be the subset of all trajectories in $\mathcal{T}$ where every index is smaller than $ \widetilde{N}$, that is
 \begin{equation*}\mathcal{T}^{{\widetilde{N}}} = \cbra{\widehat{\mathcal{T}}\in\mathcal{T} \mid (\widehat{\mathcal{T}})_{t} \leq \widetilde{N}\; \forall t \in\cbra{0,\dots,T}}.
 \end{equation*}
 Since the trajectory labelling is invariant under the permutation $\widehat{R}$ followed by selection, all trajectories appear an equal number of times in the collection of selections formed by considering every possible relabelling. We then note that $\abs{\mathcal{T}^{{\widetilde{N}}}} = \tilde{N}^{T+1}$, so we may apply \cref{lemma:expec_sums} as follows:
\begin{align}
\label{eq:elbo-proof5}
    \mathcal{L}^{N}_{\text{PVMC}} &\equiv \expec\sbra{\log\frac{1}{N^{T+1}}\sum_{\bra{n_0,\dots,n_{T}}\in \mathcal{T}} \prod^{T}_{t=0}K_t\bra{X^{n_t}_{t}, X^{n_{t-1}}_{t-1}}}\\
    \label{eq:elbo-proof6}
    &=   \expec\sbra{\log\frac{1}{\widetilde{N}^{T+1}}\expec_{\widehat{R}}\sbra{\sum_{\bra{n_0,\dots,n_{T}}\in \mathcal{T}^{\widetilde{N}}} \prod^{T}_{t=0}K_t\bra{X^{\widehat{R}_{t}\bra{n_t}}_{t}, X^{\widehat{R}_{t-1}\bra{n_{t-1}}}_{t-1}}}}\\
    \label{eq:elbo-proof7}
    & \geq \expec\sbra{\expec_{\widehat{R}}\sbra{\log\frac{1}{\widetilde{N}^{T+1}}\sum_{\bra{n_0,\dots,n_{T}}\in \mathcal{T}^{\widetilde{N}}} \prod^{T}_{t=0}K_t\bra{X^{\widehat{R}_{t}\bra{n_t}}_{t}, X^{\widehat{R}_{t-1}\bra{n_{t-1}}}_{t-1}}}},\\
    &= \expec\sbra{\log\frac{1}{\widetilde{N}^{T+1}}\sum_{\bra{n_0,\dots,n_{T}}\in \mathcal{T}^{\widetilde{N}}} \prod^{T}_{t=0}K_t\bra{X^{n_t}_{t}, X^{n_{t-1}}_{t-1}}} \equiv \mathcal{L}^{\widetilde{N}}_{\text{PVMC}},
\end{align}
where we have applied Jensen's inequality to obtain \cref{eq:elbo-proof7}, and the final step follows because the system is invariant under $\widehat{R}$ by \cref{lemma:invariant_group}. Therefore,
\begin{equation}
    \mathcal{L}^{N}_{\text{PVMC}}
    \geq
    \mathcal{L}^{\widetilde{N}}_{\text{PVMC}} .
\end{equation}
Taking \(\widetilde{N}=1\) in the same monotonicity result gives
\begin{equation}
    \mathcal{L}^{\widetilde{N}}_{\text{PVMC}}
    \geq
    \mathcal{L}^{1}_{\text{PVMC}} .
\end{equation}
For a single particle, the PVMC estimator reduces to the standard single-sample
trajectory estimator, so that
\begin{equation}
    \mathcal{L}^{1}_{\text{PVMC}}
    =
    \mathcal{L}^{1}_{\text{IWAE}} .
\end{equation}
Moreover, as shown in
\cref{eq:ELBO_P-VAE_proof,eq:ELBO_VAE_proof},
\begin{equation}
    \mathcal{L}^{1}_{\text{IWAE}}
    =
    \mathcal{L}^{N}_{\text{P-VAE}}
    =
    \mathcal{L}^{N}_{\text{VAE}} .
\end{equation}
Combining these inequalities yields the second hierarchy,
\begin{equation}
    \mathcal{L}^{N}_{\text{PVMC}}
    \geq
    \mathcal{L}^{\widetilde{N}}_{\text{PVMC}}
    \geq
    \mathcal{L}^{N}_{\text{P-VAE}}
    =
    \mathcal{L}^{N}_{\text{VAE}} .
\end{equation}
\newpage

\section{Proofs of the results in Section \ref{sec:pvmc}}
\subsection{Proof of \cref{prop:unbiased_L}}
\label{sec:proof:unbiased}
We compute
\begin{align}\expec\sbra{\prod^{T}_{t=0}K_{t}\bra{X_{t}, X_{t-1}}}
&= \int \frac{P\bra{dX_{0}}}{V_0\bra{X_{0}\mid y_{0:T}}}H_{0}\bra{y_{0}\mid X_{0}}\prod^{T}_{t=1}\frac{P_{t}\bra{dX_{t}|X_{t-1}}}{V_{t}\bra{X_{t}\mid y_{0:T}}}H_{t}\bra{y_{t}\mid X_{t}}V_{0:T}\bra{X_{0:T}\mid y_{0:T}} \\
        &=\int P\bra{dX_{0}}H_{0}\bra{y_{0}\mid X_{0}}\prod^{T}_{t=1}P_{t}\bra{dX_{t}|X_{t-1}}H_{t}\bra{y_{t}\mid X_{t}} = p\bra{y_{0:T}}\,.
\end{align}
Therefore, the expected value of the term contributed by any of the trajectories summed over in \cref{eq:likelihood_est} is equal to the likelihood. As $\mathcal{L}^{N}_{\text{PVMC}}$ is estimated from an empirical mean of these terms, it follows that this the estimator is unbiased.

\subsection{Proof of \cref{prop:consitent_L}}
\label{sec:proof_consist}
According to \cref{prop:unbiased_L}, we have that
 \begin{equation}
 \expec\sbra{\hat{L}^{N} - p\bra{y_{0:T}}} = 0,\,.
    \end{equation}
for all $N > 0$.
    To establish a rate of convergence in mean squared error (MSE), it thus remains to bound the variance.
    
We again perform the subset decomposition according to \cref{lemma:subset_decomp}.  
For each subset $\mathcal{T}_g$, define
\begin{gather}
    \label{eq:def_r}
    r^{g} := \frac{1}{N} \sum^{N}_{\bra{n_0, n_1, \dots, n_{T}} \in \mathcal{T}_{g}} \prod^{T}_{t=0}K_{t}\bra{X^{n_{t}}_{t}, X^{n_{t-1}}_{t-1}}.
\end{gather}
Each individual term has expectation $p(y_{0:T})$ and variance $\var{\hat{L}^{1}}$. Since $r^g$ in an average of $N$ independent such terms, we obtain $\mathbb{E}[r^g] = p(y_{0:T})$ and $\var{r^g} = \tfrac{1}{N} \var{\hat{L}^1}$. Since furthermore
\begin{equation}
\hat{L}^N = \frac{1}{G} \sum_{g = 1}^G r^g, 
\end{equation}
we see that 
\begin{equation}
  \var{\hat{L}^N} =  \var{\frac{1}{G}\sum^{G}_{g=1}r^{g}} \leq \frac{1}{N}\var{\hat{L}^{1}}.
\end{equation}
From the unbiasedness of $\hat{L}^N$, it follows that 
\begin{equation}
    \mathbb{E} \left[ \left({\hat{L}^N - p\bra{y_{0:T}}}\right)^2 \right] \leq \frac{1}{N}\var{\hat{L}^{1}}\,.
\end{equation}
Convergence in MSE implies convergence in probability, so we may conclude
\begin{equation}
    \hat{L}^{N} - p\bra{y_{0:t}} = \mathcal{O}_{\text{P}}\bra{N^{-\frac{1}{2}}}.
\end{equation}

\begin{remark}
    We have established that $\hat{L}^{N}$ obeys at worst the usual Monte Carlo $\sqrt{N}$ convergence, but this does not imply that $\hat{L}^{N}$ is asymptotically normally distributed, \ie we have not established a central limit theorem.
\end{remark}
\subsection{Almost sure convergence of $\hat{L}^{N}$}
\begin{proposition}
    We stated \cref{prop:consitent_L} in terms of convergence of mean squared error, as it facilitated the derivation of a rate of convergence. However, under the same conditions, $\hat{L}^{N}$ obeys the stronger almost sure mode of convergence.
    \begin{equation}
        \hat{L}^{N} \xrightarrow{\text{a.s}} p\bra{y_{0:T}}\,.
    \end{equation}
\end{proposition}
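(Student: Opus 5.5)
The plan is to realise $\hat{L}^{N}$ as a reverse martingale in $N$, and then identify its almost sure limit with a zero--one law. First couple all particle numbers on one probability space: for each $t$, let $X_t^{1},X_t^{2},\dots$ be an infinite i.i.d.\ sequence from $V_t(\cdot\mid y_{0:T})$, independent across $t$. The estimator with $N$ particles then uses the first $N$ terms of each sequence. Write $h(x_0,\dots,x_T)=K_0(x_0,\cdot)\prod_{t=1}^{T}K_t(x_t,x_{t-1})\ge 0$. Then
\begin{equation*}
\hat{L}^{N}=\frac{1}{N^{T+1}}\sum_{n_0,\dots,n_T=1}^{N}h\bra{X_0^{n_0},\dots,X_T^{n_T}} .
\end{equation*}
This is a generalised $(T+1)$-sample U-statistic of degree $(1,\dots,1)$. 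Because each coordinate is drawn from a different, independent sequence, there are no diagonal terms to correct for.

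\textbf{Step 1 (reverse martingale).} Let $\mathcal{F}_N$ be the $\sigma$-field generated by two things: the empirical measures (unordered multisets) $\{X_t^{1},\dots,X_t^{N}\}$ for each $t$, and all particles $X_t^{n}$ with $n>N$. Then $\mathcal{F}_N\supseteq\mathcal{F}_{N+1}$, so the $\sigma$-fields are decreasing. Conditional on $\mathcal{F}_N$, the independence and exchangeability in \cref{lemma:invariant_group} make $(X_0^{1},\dots,X_T^{1})$ uniformly distributed over the $N^{T+1}$ index choices. Hence $\hat{L}^{N}=\expec\sbra{h(X_0^{1},\dots,X_T^{1})\mid\mathcal{F}_N}$. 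Integrability holds because $h\ge0$ and $\expec[h]=p(y_{0:T})<\infty$ by \cref{prop:unbiased_L}. In fact only this first moment is needed, so the finite-variance assumption is more than enough.

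\textbf{Step 2 (convergence).} The backward martingale convergence theorem then gives
\begin{equation*}
\hat{L}^{N}\xrightarrow{\text{a.s.}}\expec\sbra{h\mid\mathcal{F}_\infty},\qquad \mathcal{F}_\infty=\bigcap_N\mathcal{F}_N .
\end{equation*}

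\textbf{Step 3 (triviality of $\mathcal{F}_\infty$), the main obstacle.} Consider the i.i.d.\ vector sequence $Z^{n}=(X_0^{n},\dots,X_T^{n})$. Every event in $\mathcal{F}_\infty$ is invariant under independent finite permutations of the indices at each time. In particular it is invariant under the subgroup of joint permutations, which apply the same permutation at all $t$. So $\mathcal{F}_\infty$ is contained in the exchangeable $\sigma$-field of $(Z^{n})_n$. By the Hewitt--Savage zero--one law that $\sigma$-field is trivial. The one point to check carefully is that membership in $\mathcal{F}_N$ for every $N$ really forces invariance under every finite joint permutation. This holds because any finite permutation moves only indices up to some $N$, and $\mathcal{F}_N$ is generated by functions symmetric in those indices.

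\textbf{Conclusion.} Since $\mathcal{F}_\infty$ is trivial, $\expec\sbra{h\mid\mathcal{F}_\infty}=\expec[h]=p(y_{0:T})$ almost surely, and therefore $\hat{L}^{N}\to p(y_{0:T})$ almost surely.
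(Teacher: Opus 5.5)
Your proof is correct, and it takes a genuinely different route from the paper.

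\textbf{The paper's route.} The paper reuses the decomposition of \cref{lemma:subset_decomp}: it writes $\hat{L}^{N}=N^{-T}\sum_{g}r^{g}$, with each $r^{g}$ an average of $N$ i.i.d.\ single-trajectory weights, applies the strong law to each $r^{g}$, and finishes with ``a Ces\`{a}ro mean''. This is short and stays within the paper's own machinery, but as written it is only a sketch. The groups $\mathcal{T}_g$ are rebuilt for every $N$, since they are cyclic shifts modulo $N$. So, apart from the diagonal group, $r^{g}$ is not a running mean of one fixed i.i.d.\ sequence. Even granting $r^{g}\to p\bra{y_{0:T}}$ almost surely for each $g$, averaging over $N^{T}$ groups, a number that grows with $N$, needs uniform control in $g$. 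A Ces\`{a}ro argument does not supply that.

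\textbf{Your route.} Your argument is the classical strong law for multi-sample U-statistics. The reverse martingale $\hat{L}^{N}=\expec[h\mid\mathcal{F}_N]$ handles the growing index set automatically. You also make explicit the nested coupling across $N$ that any almost-sure statement presupposes. Finally, you show that integrability of the single-trajectory weight, already supplied by \cref{prop:unbiased_L}, is enough, so the finite-variance hypothesis is not needed. The cost is heavier, though standard, machinery: L\'{e}vy's downward theorem and Hewitt--Savage.

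\textbf{A fix closer to the paper.} If you want a rigorous argument under finite variance using only the paper's ingredients, combine the bound $\var{\hat{L}^{N}}\le\var{\hat{L}^{1}}/N$ with Borel--Cantelli along $N=k^{2}$. Then use that $N^{T+1}\hat{L}^{N}$ is nondecreasing in $N$, because $h\ge 0$ and the index sets are nested. This sandwiches $\hat{L}^{N}$, for $k^{2}\le N\le(k+1)^{2}$, between $\bra{k/(k+1)}^{2(T+1)}\hat{L}^{k^{2}}$ and $\bra{(k+1)/k}^{2(T+1)}\hat{L}^{(k+1)^{2}}$, and both bounds converge to $p\bra{y_{0:T}}$.
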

\begin{proof}
    The proof is similar to that of \cref{prop:consitent_L}. But instead of using the rate of convergence of $r^{g}$, we note
    \begin{equation}
        r^{g} \xrightarrow{\text{a.s}} p\bra{y_{0:T}},
    \end{equation}
    by the strong law of large numbers. A simple application of the Ces\`{a}ro mean gives the required result.
\end{proof}

\subsection{Proof of \cref{prop:MSE-conv}}
\label{sec:proof:MSE-conv}

The proof is similar to that of \cref{prop:consitent_L}. The marginal smoother in \cref{eq:marginal-smoothing} is obtained by
marginalising the complete trajectory smoother in \cref{eq:ImportanceSmoothing}.
Therefore, for any bounded test function \(f\) of \(x_t\), its expectation under
the marginal smoother is equal to the expectation of the trajectory functional
\(x_{0:T}\mapsto f(x_t)\) under the complete trajectory smoother. Thus, it is
sufficient to analyse the corresponding self-normalised importance sampling
estimator on the complete trajectory space.

Let
\begin{equation}
\label{eq:def_mu_t_f}
    \mu_t(f)
    :=
    \expec_{x_t \sim \text{SSM}}\sbra{f(x_t)\mid y_{0:T}}.
\end{equation}
The marginal smoother in \cref{eq:marginal-smoothing} is a direct
marginalisation of the complete trajectory smoother in
\cref{eq:ImportanceSmoothing}. For any joint distribution, the expectation of a
bounded function with respect to a marginal is equal to its expectation under
the joint. Thus, we can work directly with the joint importance-weighted
distribution.

Define the unnormalised numerator
\begin{equation}
\label{eq:def_A_t_N}
    \hat{A}_{t}^{N}
    :=
    \frac{1}{N^{T+1}}
    \sum_{n_0,\dots,n_T=1}^{N}
    f\bra{X_t^{n_t}}
    \prod_{s=0}^{T}
    K_s\bra{X_s^{n_s},X_{s-1}^{n_{s-1}}}.
\end{equation}
Using the definition of the marginal weights in \cref{eq:weights}, the
estimator in \cref{eq:marginal-smoothing} can be written as
\begin{equation}
\label{eq:marginal_ratio_A_L}
    \frac{1}{\hat{L}^{N}N^{T+1}}
    \sum_{n_t=1}^{N}w_t^{n_t}f\bra{X_t^{n_t}}
    =
    \frac{\hat{A}_{t}^{N}}{\hat{L}^{N}}.
\end{equation}

By the same change-of-measure calculation used in the proof of
\cref{prop:unbiased_L}, we have
\begin{equation}
\label{eq:unbiased_unnorm}
    \expec\sbra{
    f(X_t)
    \prod_{s=0}^{T}
    K_s\bra{X_s,X_{s-1}}
    }
    =
    \int p\bra{dx_{0:T}\mid y_{0:T}}p\bra{y_{0:T}}f(x_t)
    =
    p\bra{y_{0:T}}\mu_t(f).
\end{equation}
Therefore, by \cref{eq:def_A_t_N,eq:unbiased_unnorm},
\begin{equation}
\label{eq:A_t_N_expectation}
    \expec\sbra{\hat{A}_{t}^{N}}
    =
    p\bra{y_{0:T}}\mu_t(f).
\end{equation}

Again we perform the decomposition in \cref{lemma:subset_decomp} into \(N^T\)
subsets of independent trajectories. Recall the notation \(\mathcal{T}_g\) for
the set of trajectories in subset \(g\), and define
\begin{equation}
\label{eq:def_a_t_g}
    a_t^g
    :=
    \frac{1}{N}
    \sum_{(n_0,\dots,n_T)\in\mathcal{T}_{g}}
    f\bra{X_t^{n_t}}
    \prod_{s=0}^{T}
    K_s\bra{X_s^{n_s},X_{s-1}^{n_{s-1}}}.
\end{equation}
By the construction in \cref{lemma:subset_decomp}, the trajectories within each subset are independent. Since \(f\) is bounded and
the single-trajectory importance weight has finite variance by the assumptions
of \cref{prop:consitent_L}, there exists a finite constant \(\sigma_f^2\),
independent of \(N\) and \(g\), such that
\begin{equation}
\label{eq:variance_a_t_g}
    \var{a_t^g}
    =
    \frac{\sigma_f^2}{N}.
\end{equation}
Moreover, writing \(G=N^T\), \cref{eq:def_A_t_N,eq:def_a_t_g} imply
\begin{equation}
\label{eq:A_t_N_subset_average}
    \hat{A}_{t}^{N}
    =
    \frac{1}{G}
    \sum_{g=1}^{G}a_t^g.
\end{equation}
By the same argument as in the proof of \cref{prop:consitent_L}, namely that
the variance is non-increasing under averaging random variables with common
finite variance, \cref{eq:variance_a_t_g,eq:A_t_N_subset_average} give
\begin{equation}
\label{eq:variance_A_t_N}
    \var{\hat{A}_{t}^{N}}
    \leq
    \frac{\sigma_f^2}{N}.
\end{equation}
Together, \cref{eq:A_t_N_expectation,eq:variance_A_t_N} imply
\begin{equation}
\label{eq:A_t_N_rate}
    \hat{A}_{t}^{N}
    -
    p\bra{y_{0:T}}\mu_t(f)
    =
    \mathcal{O}_{\text{P}}\bra{N^{-\frac{1}{2}}}.
\end{equation}
Using the established convergence in probability of the likelihood estimate
from \cref{prop:consitent_L},
\begin{equation}
\label{eq:L_N_rate_for_MSE_proof}
    \hat{L}^{N}
    -
    p\bra{y_{0:T}}
    =
    \mathcal{O}_{\text{P}}\bra{N^{-\frac{1}{2}}}.
\end{equation}
Since \(p\bra{y_{0:T}}>0\), \(\hat{L}^{N}\) is bounded away from zero in
probability. Therefore, by
\cref{eq:marginal_ratio_A_L,eq:A_t_N_rate,eq:L_N_rate_for_MSE_proof},
\begin{align}
\label{eq:MSE_ratio_decomposition}
    \frac{\hat{A}_{t}^{N}}{\hat{L}^{N}}
    -
    \mu_t(f)
    &=
    \frac{
    \hat{A}_{t}^{N} - \mu_t(f)\hat{L}^{N}
    }{
    \hat{L}^{N}
    }
    \notag\\
    &=
    \frac{
    \hat{A}_{t}^{N}
    -
    p\bra{y_{0:T}}\mu_t(f)
    -
    \mu_t(f)\bra{\hat{L}^{N}-p\bra{y_{0:T}}}
    }{
    \hat{L}^{N}
    }
    \notag\\
    &=
    \mathcal{O}_{\text{P}}\bra{N^{-\frac{1}{2}}}.
\end{align}
This proves the result.

\newpage

\section{Parallel Scan Pseudocode}
\label{sec:app:alg}

In this section, we adopt the short hand $a_{i:j} \leftarrow b_{i:j}$ to denote the setting of all elements in vector $a$ with indices between $i$ and $j$ in parallel. All summations of $M$ elements are implemented with a $\mathcal{O}\bra{\log M}$ reduction, for example the one given in \cref{alg:parallel-reduce}.

\subsection{Parallel reductions}
\begin{algorithm}[ht]
    \caption{Parallel Reduce}
    \label{alg:parallel-reduce}
    \begin{algorithmic}[1]
        \Input Input tensor $a_{1:T}$, and arbitrary binary associative operator $\oplus$.
        \Output Aggregation, $a_1\oplus\dots\oplus a_{T}$.
        \STATE $A_{1:T} \leftarrow a_{1:T}$
        \STATE $S \leftarrow T$
        \FOR{$i$ in $[1, ..., \text{Ceil}\bra{\log_{2} T}]$}
            \STATE $S \leftarrow \text{Length}\bra{A}$
            \STATE $R \leftarrow \text{Ceil}\bra{\frac{S}{2}}$
            \FOR{$j$ in $[1, \dots, \text{Floor}\bra{\frac{S}{2}}]$ in parallel}
              \STATE  $B_{j} \leftarrow A_{2j - 1} \oplus A_{2j}$    
           \ENDFOR
            \IF{$S$ is odd}
                \STATE $B_{R} \leftarrow A_{S}$
            \ENDIF
            \STATE $S \leftarrow R$
            \STATE $A_{1:S} \leftarrow B_{1:S}$
        \ENDFOR
        \RETURN $A_{1}$
    \end{algorithmic}
\end{algorithm}
\newpage
\subsection{Parallel prefix and suffix sums}
\label{sec:app:ps-sums}
\begin{algorithm}[ht]
    \caption{Parallel prefix and suffix sums}
    \label{alg:parallel-scans}
    \begin{algorithmic}[1]
        \Input Input tensor $a_{1:T}$, and arbitrary binary associative operator $\oplus$ with identity element $I$.
        \Output Prefix sums, $b_{1:T}$, Suffix sums, $\hat{b}_{1:T}$.
        \STATE $A^{1}_{1:T} \leftarrow a_{1:T}$
        \STATE $R \leftarrow 1$
        \STATE $S \leftarrow T$
        \WHILE{$S > 2$}
            \STATE $Q \leftarrow \text{Ceil}\bra{\frac{S+1}{2}}$  
            \FOR{$i$ in $[1, \dots, R]$ in Parallel}
                \FOR{$j$ in $[1,\dots, \text{Floor}\bra{\frac{S}{2}}]$ in Parallel}
                   \STATE $B^{i}_{j} \leftarrow A^{i}_{2j - 1} \oplus A^{i}_{2j}$ 
                \ENDFOR
                \IF{$S$ is odd}
                    \STATE $B^{i}_{Q} \leftarrow A^{i}_{S}$
                \ELSE
                    \STATE $B^{i}_{Q} \leftarrow I$
                \ENDIF
                \STATE $C^{i}_{1} \leftarrow A^{i}_{1}$
                \FOR{$k$ in $[1,\dots, \text{Floor}\bra{\frac{S-1}{2}}]$ in Parallel}
                   \STATE $C^{i}_{k+1} \leftarrow A^{i}_{2k} \oplus A^{i}_{2k + 1}$ 
                \ENDFOR
                \IF{$S$ is even}
                    \STATE $C^{i}_{Q} \leftarrow A^{i}_{S}$
                \ENDIF
            \ENDFOR
            \STATE $D^{1:R}_{1:Q} \leftarrow C^{1:R}_{1:Q}$
            \STATE $D^{R+1:2R}_{1:Q} \leftarrow B^{1:R}_{1:Q}$
            \STATE $R \leftarrow 2R$
            \STATE $S \leftarrow Q$
            \STATE $A^{1:R}_{1:S} \leftarrow D^{1:R}_{1:Q}$
        \ENDWHILE
        \RETURN $A^{1:T-1}_{1}$, $A^{1:T-1}_2$
    \end{algorithmic}
\end{algorithm}
Algorithm~\ref{alg:parallel-scans} is an efficient algorithm for the computation of all prefix and suffix sums, barring the complete reduction, by reusing intermediates. This is a crucial memory optimisation over the na\"{i}ve two scan approach during training where all intermediates need to be retained for the backwards pass.
\newpage
\section{Extensions}
\subsection{Reductions for estimators of multiplicative functions}
A multiplicative functional is one which admits the factorisation
\begin{equation}
    f_{0:T}\bra{x_{0:T}} = f_{0}\bra{x_{0}}\prod^{T}_{t=1}f_{t}\bra{x_{t},x_{t-1}}\,.
\end{equation}
From \cref{eq:ImportanceSmoothing} it is clear that
\begin{equation}
\label{eq:MultiplicativeSmoothing}
    \hat{L}^{N}Q^{N}_{0:T}\bra{f_{0:T}} = \frac{1}{N^{T+1}}\sum^{N}_{n_0,\dots,n_{T}}K_{0}\bra{X^{n_{0}}_{0},\cdot}f_0\bra{X^{n_0}_{0}}\prod^{T}_{t=1}K_{t}\bra{X^{n_{t}}_{t},X^{n_{t-1}}_{t-1}}f_{t}\bra{X^{n_{t}}_{t},X^{n_{t-1}}_{t-1}}\,.
\end{equation}
Two important special cases are \(f_{0:T}\equiv 1\), which yields the
likelihood estimator \(\hat{L}^{N}\), and
\(f_{0:T}\bra{x_{0:T}}=f\bra{x_t}\), which yields the unnormalised marginal
expectation \(\hat{L}^{N}Q_t^N\bra{f}\). Thus, the same parallel reduction can
be used either to estimate the likelihood or to estimate expectations under a
single smoothing marginal.

The right-hand side of \cref{eq:MultiplicativeSmoothing} may be computed with a parallel reduction (\cref{alg:parallel-reduce}) of span complexity $\mathcal{O}\bra{\log T}$, and, as above, $\hat{L}^{N}$ may be calculated by taking $f_{0:T}\bra{\cdot} = 1$. This reduction has a substantially smaller work and memory cost than prefix/suffix sums, leading to a speed-up over calculating the full set of importance weights. We therefore use this method to calculate $\hat{L}^{N}$ in experiments where we do not require access to the individual particles or importance weights  (such as in the financial time series setting from Section \ref{sec:finance}). If it is required to compute expectations of marginal functions at every time-step we maintain that \cref{alg:pvmc} should be the default; if $f\bra{x_{t}}$ is vector-valued, then computing the direct reduction for every time-step will use more memory than calculating the importance weights by \cref{alg:pvmc}.

\subsection{Non-factorisable proposal distributions}
\label{sec:app:non-factor}
    It is common for VAE-DSSMs to use proposal distributions where the particles at each time-step are non-independent, such as the structured proposals introduced in \cite{krishnan2017DMM}. Furthermore, the proposal structure of the classical particle filter has that each particle at time $t$ is dependent on the set of particles at time $t-1$. 
    
    Consider importance sampling on the extended space $x^{1:N}_{0:T}$, and let
    \begin{equation}
        X^{\widehat{\mathcal{T}}}_{0:T} = \cbra{X^{\widehat{\mathcal{T}}_{0}}_{0},X^{\widehat{\mathcal{T}}_{1}}_{1}, \dots, X^{\widehat{\mathcal{T}}_{T}}_{T}}
    \end{equation}
     denote a single trajectory indexed by $\widehat{\mathcal{T}} \in \mathcal{T} = (\widehat{\mathcal{T}}_{0},\dots,\widehat{\mathcal{T}}_{T})$. The corresponding importance weight may be written as
    \begin{equation}
    \label{eq:generalised_weights}
        w^{\widehat{\mathcal{T}}} \propto \frac{P^{-}\bra{X^{-\widehat{\mathcal{T}}}_{0:T}\mid X^{\widehat{\mathcal{T}}}_{0:T}, y_{0:T}}}{V\bra{X^{1:N}_{0:T}\mid y_{0:T}}}P\bra{X^{\widehat{\mathcal{T}}_{0}}_{0}}H_{0}\bra{y_{0}\mid X^{\widehat{\mathcal{T}}_{0}}_{0}}\prod^{T}_{t=1}M_{t}\bra{X^{\widehat{\mathcal{T}}_{t}}_{t}\mid X^{\widehat{\mathcal{T}}_{t-1}}_{t-1}}H_{t}\bra{y_{t}\mid X^{\widehat{\mathcal{T}}_{t}}_{t}}\,,
    \end{equation}
    where $X^{-\widehat{\mathcal{T}}}_{0:T}$ is the set of all particle locations not belonging to the target trajectory and $P^{-}\bra{\cdot\mid X^{\widehat{\mathcal{T}}}_{0:T}, y_{0:T}}$ is the artificial target of all particles $X^{-\widehat{\mathcal{T}}}_{0:T}$.

    For an importance sampler to be well defined the proposal distribution, $V$, must dominate the target; otherwise the Radon-Nikodym derivative in \cref{eq:generalised_weights} does not exist. When the proposal is factorisable into a product of marginals (\cref{eq:factor_prop}) then the proposal is path-independent and this condition is trivially satisfied assuming that each time-marginal of the proposal dominates the corresponding marginal smoothing posterior. However, general proposals introduce path-dependence and it must be ensured that the target path has non-zero probability under the proposal. To illustrate this issue, consider a state-causal proposal with conditionally-factorised density
    \begin{equation}
        V\bra{X^{1:N}_{0:T}\mid y_{0:T}} = \prod^{N}_{n=1} \bra{ V_{0}\bra{x^{n}_{0}\mid y_{0:T}}\prod^{T}_{t=1}V_{t}\bra{x^{n}_{t}\mid \cbra{x^{n}_s}_{s<t},y_{0:T}}}\,.
    \end{equation}
    Such a proposal only assigns positive probability to trajectories that follow its dependence chain.
    If we only consider target trajectories that follow this dependence chain, \ie the trajectories $\widehat{\mathcal{T}}_{t} = n \forall t \in [0,\dots,T]$, such as in \cite{krishnan2017DMM} then no problem arises. However, we wish to include every possible trajectory through the set of sampled particles in our set of targets.

    To restore domination whilst preserving a state-causal proposal we consider the following Markovian proposal distribution
    \begin{equation}
    \label{eq:Generalised_prop}
        V\bra{X^{1:N}_{0:T}\mid y_{0:T}} = \bra{\prod^{N}_{n_0=1}V_{0}\bra{X^{n_{0}}_{0}\mid y_{0:T}}}\prod^{T}_{t=1}\prod^{N}_{n_t = 1}\frac{1}{N}\sum^{N}_{i=1}V_{t}\bra{X^{n_{t}}_{t}\mid X^{i}_{t-1}, y_{0:T}},
    \end{equation}
    which considers that each particle is drawn from a uniformly weighted mixture over the set of particles at the previous time-step. Now, under the mild assumption that $H_{t}\bra{y_{t}\mid x}M_{t}\bra{\cdot\mid x} \ll V_{t}\bra{\cdot\mid x'}$ for all possible $x,x'$, the support of $V$ includes every trajectory under consideration. We make the choice
    \begin{equation}
    \label{eq:artificial_target}
        P\bra{X^{-\widehat{\mathcal{T}}}_{0:T}\mid X^{\widehat{\mathcal{T}}}_{0:T}, y_{0:T}} = \prod_{n_0 \neq \mathcal{T}_0}V_{0}\bra{X^{n_0}_{0}\mid y_{0:T}}\prod^{T}_{t=1}\prod_{n_t \neq \mathcal{T}_t}\frac{1}{N}\sum^{N}_{i=1}V_{t}\bra{X^{n_{t}}_{t}\mid X^{i}_{t-1}, y_{0:T}}\,.
    \end{equation}
    Substituting \cref{eq:artificial_target} into \cref{eq:generalised_weights}, $P\bra{X^{-\widehat{\mathcal{T}}}_{0:T}\mid X^{\widehat{\mathcal{T}}}_{0:T}, y_{0:T}}$ cancels, and we obtain the simplified weights:
    \begin{equation}
    \label{eq:markov-weights}
        w^{\widehat{\mathcal{T}}} \propto \frac{P\bra{X^{\widehat{\mathcal{T}}_{0}}_{0}}}{V_{0}\bra{X_{0}^{\widehat{\mathcal{T}}_{0}}\mid y_{0:T}}}H_{0}\bra{y_{0}\mid X^{\widehat{\mathcal{T}}_{0}}_{0}}\prod^{T}_{t=1}\frac{M_{t}\bra{X^{\widehat{\mathcal{T}}_{t}}_{t}\mid X^{\widehat{\mathcal{T}}_{t-1}}_{t-1}}H_{t}\bra{y_{t}\mid X^{\widehat{\mathcal{T}}_{t}}_{t}}}{\frac{1}{N}\sum^{N}_{i=1}V_{t}\bra{X^{\widehat{\mathcal{T}}_{t}}_{t}\mid X^{i}_{t-1}, y_{0:T}}}\,.
    \end{equation}
    \cref{eq:markov-weights} defines a valid importance sampler with a causally dependent proposal. When used inside PVMC $\hat{L}^{N}$ becomes an average of these importance weights across, in general, correlated samples; as each term is unbiased their average is unbiased. Similarly expectations of functions with respect to the smoothing marginals become averages of ratios of unbiased terms. However, we do not state or prove convergence results for dependent proposals.

    \begin{remark}
    \label{remark:label_exchange}
        The generalised proposal (\cref{eq:Generalised_prop}) admits the same symmetry in the joint law of the particles under the relabelling group $R$ (\cref{lemma:invariant_group}) as the factorised proposal (\cref{eq:factor_prop}). Consequently, sampling a particle from a uniformly weighted mixture over ancestors is equivalent in distribution to deterministically selecting a path, up to relabelling. The mixture representation is required to establish dominance and in the weighting of the particles, but not for the implementation of the sampler. This view of importance sampling is connected to the ideas presented in \cite{elvira2019genMIS}. This remark justifies the use of structured inference networks \citep{krishnan2017DMM} to parameterise the proposal distribution.
    \end{remark}
    \begin{remark}
        We can further extend \cref{eq:markov-weights} to weighted mixtures where the weights of each mixture component, $w^{1:N}_{1:T}$, are positive almost surely.
        \begin{equation}
            w^{\widehat{\mathcal{T}}} \propto \frac{P\bra{X^{\widehat{\mathcal{T}}_{0}}_{0}}}{V_{0}\bra{X_{0}^{\widehat{\mathcal{T}}_{0}}\mid y_{0:T}}}H_{0}\bra{y_{0}\mid X^{\widehat{\mathcal{T}}_{0}}_{0}}\prod^{T}_{t=1}\frac{M_{t}\bra{X^{\widehat{\mathcal{T}}_{t}}_{t}\mid X^{\widehat{\mathcal{T}}_{t-1}}_{t-1}}H_{t}\bra{y_{t}\mid X^{\widehat{\mathcal{T}}_{t}}_{t}}}{\sum^{N}_{i=1}\tilde{w}^{i}_{t}V_{t}\bra{X^{\widehat{\mathcal{T}}_{t}}_{t}\mid X^{i}_{t-1}, y_{0:T}}}\,.
        \end{equation}
        In doing so, we permit an algorithm that samples from a particle filter and re-weights by PVMC. Note, however, that introducing mixture weights breaks index invariance. So, \cref{remark:label_exchange} does not apply in this context. Moreover, one would need to choose an appropriate gradient estimator or relaxation for the mixture component selection should $\tilde{w}_{t}$ depend on learnable parameters.
    \end{remark}
\newpage


\section{Choice of proposal parameterisation}
    We do not aim to optimise the proposal network architecture in this paper, and therefore adopt simple, standard choices in our experiments. We nonetheless summarise common architecture choices that meet our assumptions.

    In order to satisfy the condition of full factorisability (\cref{eq:factor_prop}), it is required that the proposal distribution of each particle is conditionally independent from the realisation of every other particle. In practice, this can be implemented by using a sequence-to-sequence network to model the parameters of a reparameterisable distribution given the observations. Any sequence-to-sequence method such as a bidirectional LSTM \cite{hochreiter1997lstm}, transformer \cite{vaswani2017transformer}, or Mamba \cite{gu2024mamba} is appropriate. In our experiments, we use a 1D convolution neural network that convolves over the time dimension, see \cref{fig:conv_diag}. For an odd convolution kernel width $\kappa$, the maximum temporal dependency is $d\frac{\kappa - 1}{2}$, where $d$ is the number of stacked convolutional layers.

    Markovian proposals, see Appendix \ref{sec:app:non-factor}, also lead to unbiased estimation of the likelihood. However, they incur significantly higher work and memory cost as they require that the sampling density is calculated for each particle given every particle at the previous time-step. 

\begin{figure}
    \centering
    \includegraphics[width=0.5\linewidth]{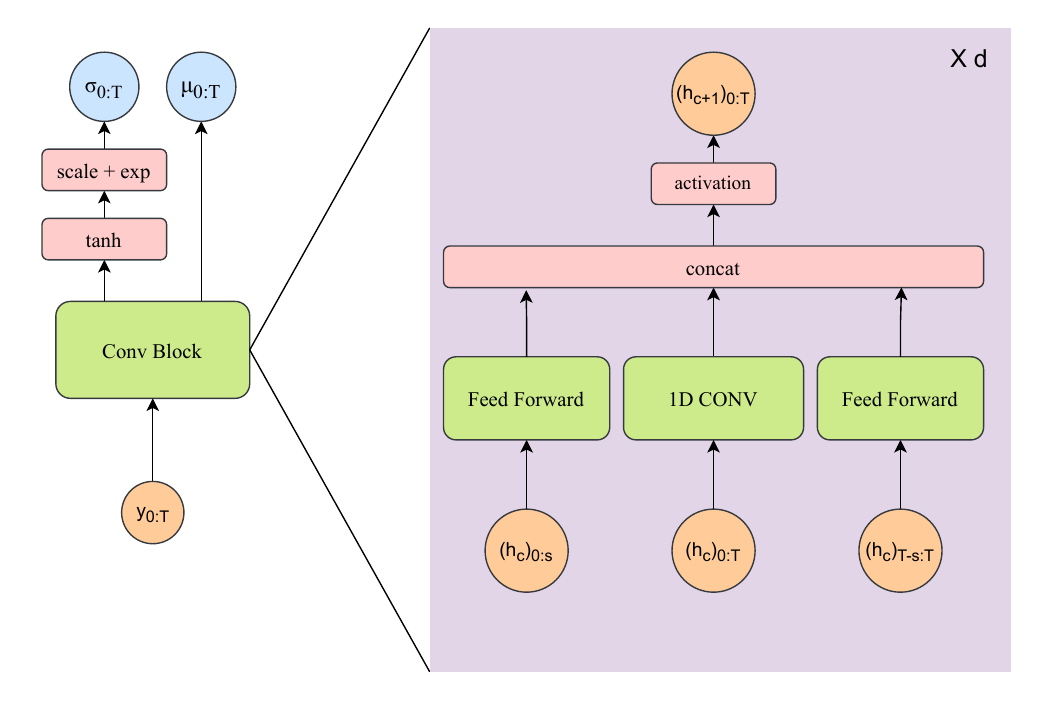}
    \caption{The proposal network we use in our experiments. The sampling distribution of the particles at time $t$ is a Gaussian with mean $\mu_{t}$ and diagional covariance matrix $\mathrm{diag}(\sigma_t \bigodot \sigma_{t})$, where $\bigodot$ is the Hadamard (element-wise) product. The width of the convolutional kernel, the length of the input to the feed forward neural networks $s$, the depth $d$, the dimensionality of each hidden state $h_{c}$, and the choice of activation function are hyperparameters.}
    \label{fig:conv_diag}
\end{figure}
\newpage

\section{Experiment details}
\label{sec:app:exp}
\subsection{General details}
All experiments are performed with the same hardware and operating system:
\begin{itemize}
    \item GPU: A single NVidia GeForce RTX 4090, 24GB of dedicated memory
    \item CPU: 24 core Intel Core i9-14900KF
    \item RAM: 64GB
    \item Operating System: Microsoft Windows 11 Enterprise 10.0.26100
\end{itemize}
All experiments are implemented in Python 3.12.11 and all tested approaches are implemented using PyTorch 2.8.0+cu129 \cite{paszke2019pytorch} as the numerical and auto-differentiation library. No implemented models make use of just in time (jit) compilation.

\subsection{Baselines}
\label{sec:app:ex:baselines}
In total, we consider 1 ablation and 10 baseline approaches. Many baselines are not suitable for all the tasks that we wish to test PVMC on, so we vary the baselines between the experiments. We justify our choices individually for each experiment in \cref{sec:app:ex:lg,sec:app:ex:lv,sec:app:ex:spx}

\paragraph{Non-differentiable algorithms:}
These algorithms are not differentiable, so they cannot be used in the end-to-end training of DSSMs. In \cref{sec:exp:lg}, we use them to verify that the forward pass of PVMC functions as a Bayesian smoother. In particular, we consider:
\begin{itemize}
    \item Kalman filter \cite{kalman1960kalman_filter};
    \item Rauch-Tung-Striebel (RTS) Smoother \cite{Ruach1965RTS};
    \item Two filter smoother (TFS) \cite{briers2010two-filter};
    \item De-sequentialized Monte Carlo (d-SMC) \cite{corenflos2022DSMC}.
\end{itemize}
Of these algorithms, the Kalman filter and the RTS smoother are exact algorithms and restricted to linear and Gaussian SSMs; TFS and d-SMC are particle methods that are suitable for use on a much more general class of non-linear SSMs. TFS is computationally sequential, whilst d-SMC, the Kalman filter, and the RTS smoother are implemented with span complexity $\mathcal{O}\bra{\log T}$ recursions, for the latter two methods, due to our use of the algorithms presented in \cite{sarkka2021parallelkalman}. The RTS smoother and TFS calculate and approximate respectively the time-marginal smoothing distributions; d-SMC approximates the complete smoothing distribution and the Kalman filter calculates the filtering distribution.

\paragraph{DSMC:}
DSMC algorithms can be applied to the same class of tasks as PVMC, including unsupervised training of generative models and the supervised training of generative and discriminative models. We consider:
\begin{itemize}
    \item Soft differentiable particle filter (Soft DPF) \cite{karkus2018DPF};
    \item Marginal stop-gradient differentiable particle filter (Stop-grad DPF) \cite{scibior2021stopgrad};
    \item Diffusion resampling differentiable particle filter (Diffusion DPF) \cite{andersson2025diffusion-res};
    \item Mixture density particle smoother (MDPS) \cite{younis2024diff-smoother}.
\end{itemize}
Of these algorithms, Soft DPF, Stop-grad DPF and Diffusion DPF approximate the filtering distribution on their forward pass; and MDPS approximates the marginal smoothing distribution. All three differentiable filters can be run with only a parameterisation of the SSM specified; the MDPS needs additional parameters to define a backwards-in-time proposal process and an assimilation rule to combine the forward and backward particles. The three filtering algorithms yield statistically consistent estimators of expectations with respect to the filtering posterior under the SSM; however MDPS's assimilated weights are the output of a neural network, so we cannot give consistency guarantees on the output of its forward pass. Of the DSMC baselines considered, only Diffusion DPF yields unbiased estimates of either the weights or the particle locations with respect to its parameters.

\paragraph{Variational Auto-Encoders:}
The variational auto-encoders we consider do not provide mechanisms to marginalise over the latent states at each time-step and therefore target a distribution that grows in dimension exponentially with $T$. Therefore, these algorithms are not suitable for either state-estimation or supervised learning tasks. We consider:
\begin{itemize}
    \item Deep Markov model (DMM) \cite{krishnan2017DMM};
    \item Time-Causal variational auto-encoder \cite{acciaio2024causalvae}.
\end{itemize}
For DMM, the generative component is a DSSM. For TCVAE, $p\bra{x_{0:T}}$ is parameterised by a normalising flow and the observations are the outputs of a sequence of deterministic functions $y_{t} = f_{t}\bra{x_{0:t}}$ that preserve causality. DMM targets a valid lower bound on the ELBO, equivalent in mean to $\mathcal{L}^{N}_{\text{VAE}}$, defined in \cref{eq:VAE_ELBO}. For TCVAE, $p\bra{y_{t}\mid X_{0:t}}$ is zero almost everywhere. As a consequence, we cannot compute the weights by \cref{eq:IWAE}; to rectify this, the conditional likelihood term is replaced by the L1 distance between $f_{t}\bra{X_{0:t}}$ and $y_{t}$.

\subsection{Linear Gaussian --- state estimation}
\label{sec:app:ex:lg}
    The aim of this experiment is to show that PVMC is effective at smoothing fully specified SSMs. Our choice to use a linear and Gaussian SSM means that we can compare directly to the exact marginal posterior. The true, data-generating SSM is the linear and Gaussian system used to benchmark DSMC algorithms in \cite{brady2025pydpf}. For this experiment, we chose two non-differentiable smoothers to compare to, TFS and d-SMC. We exclude MDPS from this experiment as it is not capable of approximating the posterior of a specified SSM. TFS is a classic algorithm for approximating the marginal smoothing distribution at each time-step. d-SMC is, to the best of our knowledge, the only other parallel-in-time particle smoother. However, d-SMC targets the complete smoothing posterior, so we do not expect its output to be a good fit to the marginal posterior. As both of these particle smoothers use an analytically derived proposal distribution, we afford the same consideration to PVMC by proposing the particles from the exact filtering posteriors. To test the learning of an efficient proposal by optimising $\hat{L}^{N}$, and to more accurately mimic a realistic scenario where it is hard to derive an efficient proposal analytically, we also include a PVMC algorithm with a learned proposal. Finally we include the Kalman filter. Since we use the Kalman filter to propose particles for PVMC, comparison with the Kalman filter verifies that the PVMC weight recursion improves the estimate over taking uniformly weighted samples from the proposal.

    For all particle methods, we propose $64$ particles per time-step for $501$ time-steps. We process in parallel $400$ distinct testing trajectories in batches of $64$, for all methods apart from PVMC, for which we take batches of $16$. We adopt a different number of batches for PVMC because it is more compute intensive than other methods and incurs a significant slowdown when the GPU resources are stretched. The other methods run less simultaneous operations per trajectory than PVMC, so it is more efficient to process trajectories in a greater number of parallel batches. The total number of trajectories processed for each method remains constant. The mean square error is averaged over the $400$ trajectories, each repeated $20$ times. The reported elapsed times are the times in seconds to process the complete set of $400$ trajectories, averaged over $20$ repeats.

    To assess the complete posterior fit we adopt a kernelised Stein discrepancy (KSD) between the particle approximation and the analytic posterior at the time-step $t=249$ \cite{liu2016KSD} with a radial basis function (Gaussian) kernel,
    \begin{equation*}
        K\bra{x,x'} = e^{\frac{-\norm{x - x'}^{2}_{2}}{2l^{2}}},
    \end{equation*}
    where the bandwidth $l$ is automatically selected. To select $l$ we simulate $400$ independent observation trajectories $y_{0:T}$ and compute the density function $p\bra{x_{t}\mid y_{0:T}}$ for $t=249$ by the RTS smoother. We then sample $64$ particles from each posterior and calculate the squared median heuristic \cite{garreau2017medianheuristic} for each posterior. We take $l^{2}$ as the average of these squared median heuristics. We base the bandwidth $l$ on the target rather than the approximation, as is more usual, so that the bandwidth is constant across experiments. We choose to report the KSD in the main paper as it is a tractable and well-defined discrepancy between a weighted empirical distribution and a known Gaussian distribution. We supplement the main paper results with \cref{tab:lgwass} which compares the 2-Wasserstein distance between the exact filtering posterior and the Gaussian obtained by moment matching to method's approximation. The 2-Wasserstein distance has an intuitive geometric interpretation. However, because we have employed moment matching it can only capture information about the first two moments of the approximation and does not capture any higher-order non-Gaussianity. 
    \begin{table}[t]
    \centering
    \begin{tabular}{cccccc}
    \toprule
    Method & Kalman Filter & TFS & d-SMC & PVMC (Kalman proposal) & PVMC (Learned proposal)\\
    \midrule
     2 Wass. Dist. & $0.14$ & $0.82$ & $1.04$ & $0.13$ & $0.14$\\
    \bottomrule
    \end{tabular}
    \caption{Comparison of the approaches in their mean squared 2-Wasserstein distance from the approximate posterior to the exact posterior. For the Kalman filter the approximate posterior is the exact filtering posterior. For all other methods, a Gaussian is fitted to the weighted posterior by moment matching.}
    \label{tab:lgwass}
    \end{table}
 
    To parameterise the proposal distribution for the PVMC algorithm with a learned proposal, we use the proposal network described by \cref{fig:conv_diag}. We choose convolutional kernels of size $7$, a hidden dimension of $16$ channels per time-step, a selection length of $s=9$ to input to the extremal feed-forward blocks, and stack $d=4$ layers. Every layer uses a RELU activation, apart from the last one, which uses the identity function. The log standard deviation scaling factor is $1$. The proposal has a total of $46,278$ parameters. We train our model with the ADAM optimiser with default parameters on the loss $\frac{-1}{T+1}\hat{L}^{N}$. During training we use a batch size of $32$ and propose $32$ particles per time-step for $501$ time-steps. We train for $100$ epochs on a total of $500$ independent training trajectories. The final chosen model is the one which achieved the highest average $\hat{L}^{N}$ over a set of $100$ validation trajectories.

    d-SMC uses the Kalman filtering distributions to propose particles. The TFS proposes the backwards particles from the backwards Markov kernel implied by the forward SSM to ensure maximum cancellation of terms and therefore swift execution.

    In addition to the experiment described above, with results in \cref{tab:LG-results}, we also measured the relationship between trajectory length and execution time for each of the particle methods. We separately report the forward pass and, for differentiable methods, backward pass runtimes. The PVMC and MDPS require additional parameters supplementary to the SSM --  we train these parameters for $10$ epochs to move away from any sparse or structured initialisations so that the execution times are representative of a realistic inference scenario. To facilitate fair comparison in this experiment, we propose particles for d-SMC from the same proposal as PVMC. Results are presented in \cref{fig:runtime}.

    \begin{figure*}[t]
    \centering
    \begin{subfigure}[t]{0.48\textwidth}
        \includegraphics[width=\linewidth]{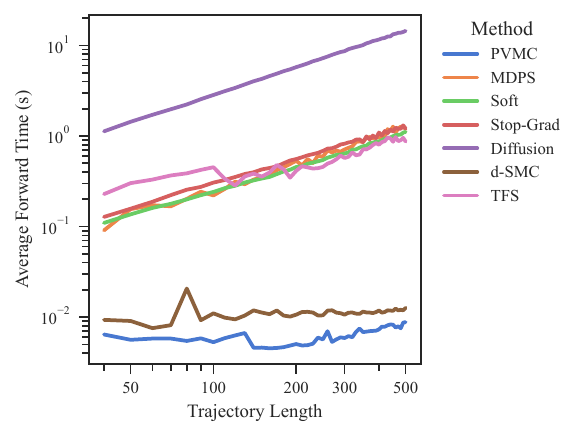}
    \end{subfigure}
    \hspace{0.03\textwidth}
    \begin{subfigure}[t]{0.48\textwidth}
    \centering
    \includegraphics[width=\linewidth]{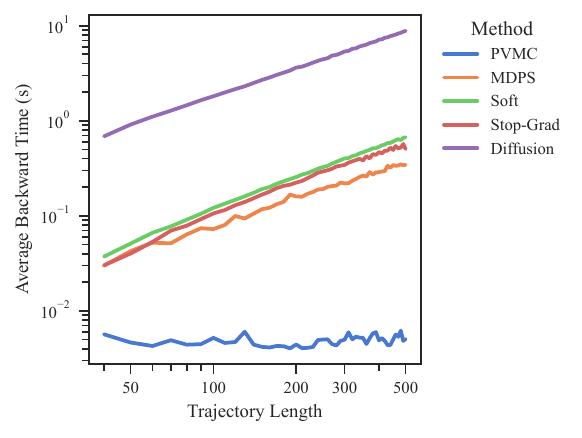}
    \end{subfigure}
    \caption{Comparison of runtimes of particle based methods for the Linear and Gaussian SSM specified in \cref{eq:ex:lg}.}
    \label{fig:runtime}
\end{figure*}
    
\subsection{Prey-predator mode --- state estimation}
\label{sec:app:ex:lv}
The goal of this experiment is to demonstrate PVMC's ability to learn an SSM that accurately represents the distribution of the latent state by supervised training. The auto-encoding baselines DMM and TCVAE do not admit supervised losses so are excluded from this experiment, as are the non-differentiable methods. The data is simulated from a data-generating SSM given by \citet{andersson2025diffusion-res}. We choose this setting as, despite its simplicity, \cite{andersson2025diffusion-res} found it challenging for DSMC algorithms with the highest number of non-diverging training runs from a grid search over hyper-parameters being $14$ out of $20$ and the best amongst their baselines being $9$ out of $20$.

Of interest is both that we can learn a smoother that performs well, but also that the learned SSM yields a meaningful representation of the latent state. The first objective is quantified by the mean squared error between the predicted latent state and the ground truth state. The second objective is assessed in two ways: firstly, we measure the mean squared error of a classical bootstrap particle filter with $N=1,000$ particles using the SSM that we learned. Secondly, we calculate the squared empirical sliced 2-Wasserstein distance between importance weighted approximations of the posterior of the latent state at the last time-step under the learned SSM and under the true data-generating SSM. The squared sliced 2-Wasserstein distance is defined as the mean of the squared 2-Wasserstein distance of $512$ random 1D projections of the particles.

Every method has the same SSM parameterisation. We follow \citet{andersson2025diffusion-res} for the prior and observation model which have no learned parameters. Every trajectory is initialised with $x_{0} = (2,5)$. When sampling the prior we deterministically choose $x_0 = (2,5)$, whilst when evaluating the prior's density we set the weights of every particle uniformly to $\frac{1}{N}$. The observation model is the true observation model \cref{eq:lv:obs-model}. Our dynamic model has the form
\begin{gather}
\label{eq:EM-NN}
    x_{t} = f_{d}\bra{x_{t-1}} + x_{t-1} + \xi \bigodot e^{\text{tanh}\bra{f_{\sigma}\bra{x_{t-1}}}}, \qquad 
    \xi \sim \mathcal{N}\bra{(0,0), I_{2}},
\end{gather}
where $\bigodot$ is the Hadamard (element-wise) product and $I_{2}$ is the $2\times2$ matrix identity, thus mimicking the form of an Euler–Maruyama integrator. The functions $f_{d}$ and $f_{\sigma}$ are fully-connected feed forward neural networks with $5$ hidden layers of width $32$. Every layer uses the swish activation function \cite{ramachandran2017swish} apart from the last one, which uses the identity function. The dynamic model has a total of $8,772$ trained parameters.

The PVMC and MDPS require additional parameterised components. To parameterise the proposal distribution for the PVMC algorithm, we use the proposal network described by \cref{fig:conv_diag}. We choose convolutional kernels of size $7$ for all layers (apart from the last one, for which we use a kernel of size $5$), a hidden dimension of $16$ channels per time-step for the first hidden layer and $32$ for all subsequent hidden layers, a selection length of $s=9$ to input to the extremal feed-forward blocks, and stack $d=5$ layers. In between each convolution layer we add a feed forward layer that acts across channels at single time-steps. Every layer uses a RELU activation (apart from the last one, which uses the identity function). The log standard deviation scaling factor is $1$. The proposal has a total of $167,604$ parameters. The MDPS has a backwards SSM for which the prior is uniform over the range $[0,5]$ for $\bra{x_{0}}_1$ and $[0,7]$ for $\bra{x_{0}}_2$. The backwards observation model is the same as the forwards observation model. The backwards dynamical model uses a parameterisation of the same architecture as the forward model. The weight assimilation model follows the parameterisation described by \citet{younis2024diff-smoother} and has $513$ total parameters.

We propose $32$ particles per time-step for $257$ time-steps and train in parallel batches of $16$ trajectories. There are $100$ trajectories in the training and testing datasets, and $50$ in the validation dataset. We train each model for $100$ epochs and return the model that achieved the lowest MSE on the validation dataset. We optimise $\bra{\text{MSE} - \frac{1}{257}\text{ELBO}}$ using the Adam optimiser with default parameters.

For our ablation method, P-VAE, the MSE term in the loss is the PVMC MSE, however we use an ELBO equivalent in mean to $\mathcal{L}^{N}_{\text{VAE}}$. The low MSE achieved by the P-VAE, coupled with the poor Filtering MSE and 2-SWD, suggest that only the proposal component is being learned effectively for this model.

We visualise the results of our experiment in \cref{fig:lv-boxplot}. These plots demonstrate that whilst established DSMC methods such as Stop-Grad, Soft DPF and MDPS occasionally converge to a good representation, they suffer from much greater random seed dependence than PVMC.

\begin{figure}
  \centering
  \begin{minipage}{0.32\linewidth}
    \centering
    \includegraphics[width=\linewidth]{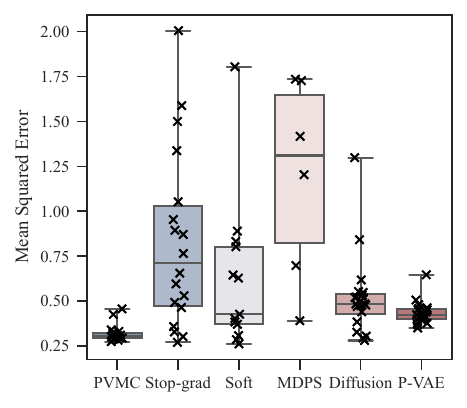}
    \vspace{-0.6em}
  \end{minipage}\hfill
  \begin{minipage}{0.32\linewidth}
    \centering
    \includegraphics[width=\linewidth]{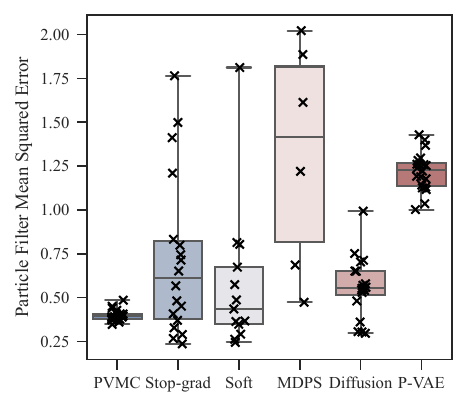}
    \vspace{-0.6em}
  \end{minipage}
  \vspace{-0.6em}
  \begin{minipage}{0.32\linewidth}
    \centering
    \includegraphics[width=\linewidth]{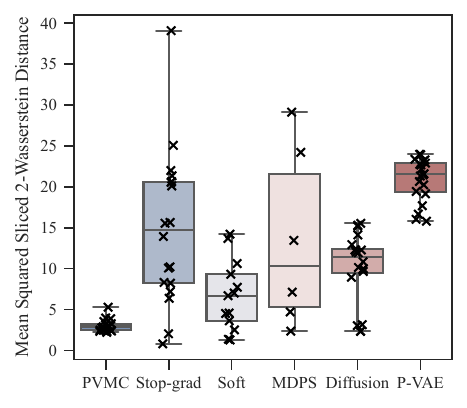}
    \vspace{-0.6em}
  \end{minipage}
  \vspace{-0.6em}
  \caption{Box plots showing the spread of MSEs and squared sliced 2-Wasserstein distances achieved by each training approach for the different training runs for the prey-predator experiment. Failed runs are not plotted.}
  \label{fig:lv-boxplot}
\end{figure}

\subsection{Financial time series --- generative modelling}
\label{sec:app:ex:spx}
This final experiment is included to test the ability of PVMC to model complex sequential distributions. We chose to model the distribution of the returns of the SPX under the historical measure because it is a challenging problem in the following respects: data-availability is limited, we have only $2516$ data-points all belonging to a single trajectory; and it is well established empirically that the distribution of asset returns is highly non-Gaussian \cite{cont2001assetret}. Furthermore, the distributions of the returns of financial assets are known to exhibit a number of stylised properties that are of interest to financial engineers which we can use to evaluate the quality of our simulated trajectories. 

The considered baselines are the two auto-encoders, DMM and TCVAE, and Soft DPF. We exclude Diffusion DPF because it was too slow to run within our computation budget; Stop-grad DPF because it was unstable during training; and MDPS because it collapses to a simpler model that is very similar to the Stop-grad DPF when trained on unsupervised objectives. We also exclude all the non-differentiable baselines. 

In this experiment we parameterise each considered model differently; for the DMM and TCVAE we adopt the proposal architectures of the papers that proposed them. For this reason we include an ablation, P-VAE, that trains the same architecture as PVMC, but using the DMM's variational target. We do not use the DMM proposal in a PVMC algorithm as it has a causal, non-factorisable proposal, see Appendix \ref{sec:app:non-factor}.

We choose the dimension of the latent-state space to be 4. The prior model is a Student's t-distribution with 5 degrees of freedom, centred at zero and with a learnable scale parameter. The dynamic model is a conditional normalising flow with a Student's t-prior. We stack three conditional RealNVP \cite{dinh2016realnvp} layers, each with a hidden dimension of $32$ \cite{chen2024normalisingflow}, with a total of $28,824$ learnable parameters. The observation model represents the distribution of log daily returns given the latent state as a mixture of Gaussians. All components of the mixture at all time-steps have the same learned variance, but the weights and locations of each mixture component are neural functions of the latent state. We also learn a constant temperature parameter that controls how uniform the weights tend to be. The total number of parameters for the observation model is $27,362$. 

To parameterise the proposal distribution for the PVMC algorithm, we use the proposal network described by \cref{fig:conv_diag}. We choose convolutional kernels of size $7$, a hidden dimension of $16$ channels per time-step, a selection length of $s=9$ to input to the extremal feed-forward blocks, and stack $d=4$ layers. Every layer uses a RELU activation, apart from the last one, which uses the identity function. The log standard deviation scaling factor is $1$. The total number of learnable parameters of the proposal distribution is $40,408$. The DMM similarly represents the proposal by a Gaussian. Instead of a 1D convolution, the DMM parameterises the mean and log covariance by assimilating the results of foward and backwards in time LSTMs \cite{hochreiter1997lstm}; full details can be found in \cite{krishnan2017DMM}. The total number of parameters belonging to the DMM proposal is $9,384$.

The TCVAE architecture is taken exactly from \cite{acciaio2024causalvae}, so we avoid repeating many of the details here. The TCVAE consists of a normalising flow latent-state generative prior, a Gaussian proposal, and a deterministic observation model. The normalising flow prior has three stacked RealNVP layers with a hidden dimension of $250$. The observation and proposal distributions consist of forward-in-time (that is, causal) stacked LSTMs with hidden dimensions of 16 and linear layers to project the inputs and outputs of the LTSMs up or down as appropriate. The prior, observation model and proposal have $2,198,880$; $5,896$; and $4,785$ parameters respectively.

For all models we use the Adam optimiser with learning rate $10^{-4}$ and all other parameters at their default values. We train all our models by minimising $\frac{-1}{120}\text{ELBO}$ with the appropriate ELBO for the experiment. We propose $32$ particles per time-step for $120$ time-steps during training. We train for $100$ epochs. In unsupervised training, there is no ground truth latent state to which we can ground the latent state representation. Therefore the models in this experiment are at risk of posterior collapse, where the system gets stuck in a local minimum where the proposal is matched to the posterior distribution under the data generating model at the expense of discarding the observations. To mitigate this possibility we use the strategy of Beta-VAEs \cite{higgins2017betaVAE}, where we artificially suppress the gradient signal due to the prior and dynamical models. For PVMC, DMM and P-VAE we decrease the suppression from a factor of 0.05 to 1 over the course of training. For Soft DPF it is unclear how one would implement such suppression and it is, to the best of our knowledge, not used in prior work in DPFs. For TCVAE we adopt the constant suppression factor used in \cite{acciaio2024causalvae}. To combat over-fitting we periodically inspect the distribution of returns during training and choose the qualitatively best performing model.

We supplement the results in the main text with a number of additional plots. \cref{fig:auto-corr-ret} shows that all models achieve a small autocorrelation of daily returns (with respect to the noise in the training data). We also plot the histograms of the individual daily returns generated by each method in \cref{fig:hist_returns}. All methods predict distributions that are overly wide around the peak compared to the training data. Interestingly, this phenomenon is less pronounced in the methods Soft DPF and DMM, which failed to learn the stylised features of the SPX. This suggests that they have collapsed to a local minima where the marginal distributions of returns is accurately learned, but not the distribution on path space. 
\begin{figure}
    \centering
    \includegraphics[width=0.5\linewidth]{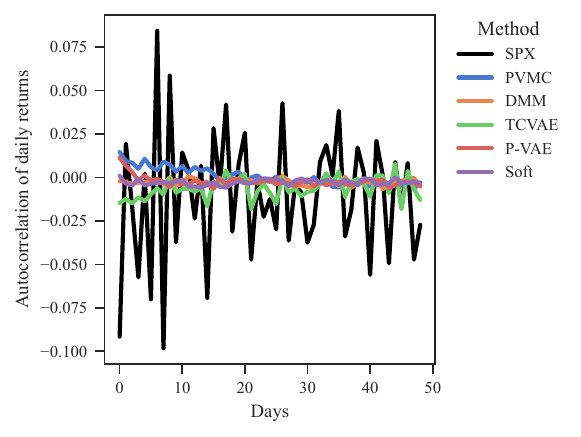}
    \caption{Comparison of the mean autocorrelation of daily returns over 1000 generated trajectories of 360 days to 6 non-overlapping real SPX trajectories of the same length.}
    \label{fig:auto-corr-ret}
\end{figure}
\begin{figure}
    \centering
    \includegraphics[width=\linewidth]{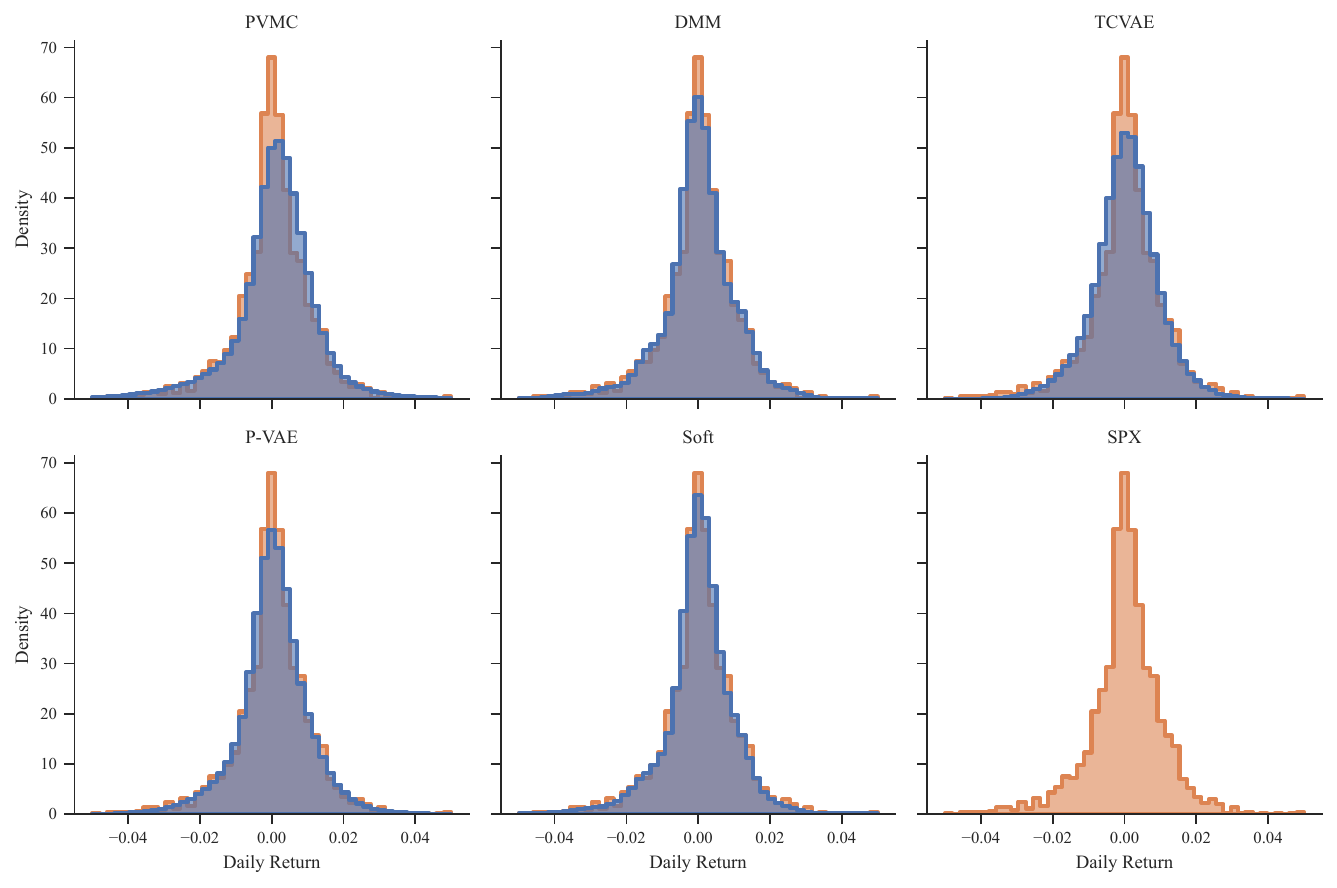}
    \caption{Histograms to compare the frequencies of the daily returns between each method and the SPX. The SPX's distribution plotted in orange and the generated data's in blue.}
    \label{fig:hist_returns}
\end{figure}
\end{document}